\documentclass[11pt,a4paper]{article}

\usepackage[T1]{fontenc}
\usepackage[utf8]{inputenc}
\usepackage{lmodern}
\usepackage{newunicodechar}
\newunicodechar{—}{---}
\newunicodechar{ł}{\l{}}
\usepackage[a4paper,margin=1in]{geometry}
\usepackage{amsmath,amssymb,amsthm}
\usepackage{graphicx}
\usepackage{booktabs}
\usepackage{multirow}
\usepackage{subcaption}
\usepackage[numbers,sort&compress]{natbib}
\usepackage{xcolor}
\usepackage{microtype}
\usepackage{hyperref}

\hypersetup{
    colorlinks=true,
    citecolor=blue,
    linkcolor=blue,
    urlcolor=blue,
    pdftitle={DynG-Diff: A State-Aware Dynamic Guidance Diffusion Framework for Probabilistic Time Series Forecasting},
    pdfauthor={Zhente Zhang, Zhengwei Ni, and Wei Fan},
    pdfkeywords={Multivariate time series, probabilistic forecasting, diffusion, dynamic guidance, state-aware network, information heterogeneity}
}

\theoremstyle{definition}
\newtheorem{theorem}{Theorem}
\newtheorem{definition}{Definition}

\title{DynG-Diff: A State-Aware Dynamic Guidance Diffusion Framework for Probabilistic Time Series Forecasting}
\author{%
Zhente Zhang\textsuperscript{1},
Zhengwei Ni\textsuperscript{1,*}, and
Wei Fan\textsuperscript{2}\\[0.6em]
\small \textsuperscript{1}School of Information and Electronic Engineering (Sussex Artificial Intelligence Institute),\\[-0.1em]
\small Zhejiang Gongshang University, Hangzhou, Zhejiang, China\\
\small \textsuperscript{2}School of Computer Science, University of Auckland, Auckland, New Zealand\\[0.4em]
\small \textsuperscript{*}Corresponding author: \href{mailto:zhengwei.ni@zjgsu.edu.cn}{zhengwei.ni@zjgsu.edu.cn}\\
\small \href{mailto:zhangzhente@163.com}{zhangzhente@163.com};
\href{mailto:wei.fan@auckland.ac.nz}{wei.fan@auckland.ac.nz}
}
\date{}

\begin{document}
\maketitle

\begin{abstract}
\noindent Probabilistic multivariate time series (MTS) forecasting is crucial for modeling complex dynamical systems. However, existing diffusion-based methods rely on task-specific conditional paradigms that lack flexibility and struggle with inherent "information heterogeneity"—the significantly varying noise levels and evolutionary patterns across variables. To address this, we propose DynG-Diff, a variable-sensitive dynamic guidance diffusion framework for probabilistic multivariate time-series forecasting: (1) DynG-Diff adopts a two-stage separated training strategy and uses an unconditional diffusion backbone to model the joint distribution of multivariate time series. (2) DynG-Diff introduces a lightweight state-aware policy network that adaptively infers variable reliability from real-time noisy states and one-step denoising estimates, outputting a dynamic guidance strength matrix. (3) DynG-Diff mathematically formulates this dynamic weight as the local precision of the observation distribution, enabling precise guidance for high-confidence variables during inference while filtering out interference from anomalous noise. Extensive experiments on real-world benchmarks demonstrate competitive probabilistic forecasting performance against state-of-the-art conditional diffusion models and improved robustness under severe observation corruption. The implementation code is available at: \url{https://github.com/TT-20011031/DynG-Diff}
\end{abstract}

\noindent\textbf{Keywords:}
Multivariate time series; probabilistic forecasting; diffusion; dynamic guidance; state-aware network; information heterogeneity

\vspace{1em}

\section{Introduction}
Probabilistic forecasting of multivariate time series (MTS) is the foundation for modeling complex dynamical systems such as energy grids \cite{nowotarski_recent_2018}, transportation \cite{huang_metaprobformer_2023}, finance \cite{gao_diffsformer_2024}, and healthcare \cite{teng_stocast_2020}. A core challenge in this domain lies in capturing the intricate heterogeneity \cite{nie_time_2023} and coupling \cite{liu_itransformer_2024,zhang_crossformer_2023} within high-dimensional data: different variables often exhibit distinctly different physical characteristics, noise levels, and distribution shifts, yet their temporal evolutions are highly interdependent. Although traditional deterministic methods are effective at capturing trend components, they often fall short in characterizing the inherent stochasticity and multimodal distributions of such systems \cite{salinas_deepar_2020}. This limitation has driven a paradigm shift toward generative probabilistic models \cite{salinas_high-dimensional_2019,li_generative_2022,rasul_multivariate_2021}, which aim to fundamentally learn the underlying joint data distribution. However, effectively modeling these distributions requires not only capturing global dependencies but also respecting the varying information densities of individual variables—a requirement that poses a significant challenge to existing generative frameworks.

Given the limitations of traditional methods, diffusion models \cite{sohl-dickstein_deep_2015,ho_denoising_2020} have rapidly become the framework of choice for generative modeling, owing to their progressive noise removal mechanisms and exceptional distribution-fitting capabilities. Consequently, conditional diffusion models have emerged as a prominent approach. Existing research primarily translates forecasting into a conditional generation problem through two strategies: the first is an end-to-end conditioning strategy, which directly embeds historical observations as conditions into the denoising network to guide generation \cite{tashiro_csdi_2021,alcaraz_diffusion-based_2023,li_channel-aware_2024}; the second is a divide-and-conquer strategy, which utilizes deterministic models to capture trends and then relies on diffusion models to model the residual distribution \cite{li_diffusion-based_2025,lai_rdit_2025}. Although these methods have achieved notable results on specific benchmarks, they are generally constrained by a "task-specific" training paradigm—meaning the model must be specifically trained for a particular prediction horizon or task objective. Once the scenario changes, the model must be retrained, lacking the flexibility expected of generative models. Meanwhile, the few existing studies on unconditional generation primarily focus on the image \cite{dhariwal_diffusion_2021,ho_classifier-free_2022} and audio \cite{kong_diffwave_2021} domains, or are limited to univariate time series \cite{kollovieh_predict_2023}, leading to two critical scientific gaps that urgently need addressing. First, the unconditional generative potential of diffusion models in the MTS domain is severely underestimated. Existing unconditional frameworks have failed to fully demonstrate their capability as "universal generative priors," where a single task-agnostic foundation model can be flexibly adapted to diverse downstream tasks through posterior guidance during inference, without requiring parameter fine-tuning for each task. Second, existing guidance mechanisms struggle to address the "information heterogeneity" challenge in multivariate systems. Unlike univariate sequences, different channels in real-world multivariate systems often exhibit significantly different statistical properties, noise levels, and physical correlation strengths. When using observational data to guide the generation process, the model must possess the ability to distinguish between "primary and secondary" variables and their "signal-to-noise ratios." That is, it must intelligently apply precise guidance to high-confidence key variables while tolerating the randomness of high-noise variables. However, designing a "variable-sensitive" adaptive mechanism to dynamically quantify and utilize this heterogeneous information in the absence of explicit labels remains the key to improving the forecasting accuracy of diffusion models in complex MTS systems.

To overcome these challenges, this paper proposes DynG-Diff (Dynamic Guidance Diffusion), a variable-aware probabilistic forecasting framework for multivariate time series. DynG-Diff maintains the diffusion backbone network in an unconditional generation mode and designs a lightweight State-Aware Policy Network as a plug-in. Rather than relying on manually set hyperparameters, this network adaptively infers the "reliability" of each variable at the current timestep based on the real-time noisy state and one-step estimation during the diffusion process, outputting a Dynamic Guidance Strength Matrix. Mathematically, we interpret this dynamic weight as a plug-in local precision of the observation distribution, yielding a weighted observation-likelihood guidance objective. This design enables the model to intelligently apply precise guidance to high-confidence key variables during inference while automatically reducing the weight interference from highly noisy or anomalous variables. Notably, the policy network is optimized independently while the unconditional backbone remains frozen. This separation allows the guidance mechanism to be adapted without jointly retraining the diffusion backbone.

To address the limitations of existing time series forecasting models in handling complex data distributions and quantifying uncertainty, we propose DynG-Diff, a Dynamic-Guidance Diffusion Model designed specifically for probabilistic time series forecasting. The main contributions of this paper are summarized as follows:

\begin{itemize}
\item We propose DynG-Diff, a decoupled probabilistic forecasting framework that explicitly addresses information heterogeneity by combining an unconditional diffusion backbone with state-dependent, variable-specific guidance during inference. This separation enables the framework to selectively exploit reliable observations without jointly retraining the diffusion backbone.

\item We develop a self-supervised State-Aware Policy Network that estimates variable reliability from the current noisy state and its one-step denoising estimate, producing a dynamic guidance matrix for each variable and diffusion timestep. We further interpret this matrix as a plug-in local precision of an Asymmetric Laplace Distribution and derive a weighted observation-likelihood objective, providing a probabilistic motivation for adaptive guidance.

\item We conduct extensive experiments on real-world benchmark datasets across various domains. The results demonstrate that DynG-Diff achieves highly competitive performance in both forecasting accuracy and probabilistic metrics. Furthermore, comprehensive ablation studies and visualization analyses validate the robustness of the proposed architecture and its substantial potential for practical applications. 
\end{itemize}

The remainder of this paper is organized as follows. Section \ref{sec:related work} reviews related work. Section \ref{sec:preliminaries} covers the problem formulation and preliminaries. Section \ref{sec:method} details our proposed DynG-Diff framework. Section \ref{sec:experiments} evaluates the model through extensive experiments, and Section \ref{sec:Conclusion} concludes the paper.

\section{Related Work}
\label{sec:related work}

\textbf{Time Series Generation.} In recent years, deep generative models have shown immense potential in handling data generation tasks across various domains, and time series generation, being one of the most challenging tasks in the generative field, has also received widespread attention. In early explorations, Generative Adversarial Networks (GANs) played a dominant role. Mogren et al. \cite{mogren_c-rnn-gan_2016} pioneered C-RNN-GAN, innovatively integrating Long Short-Term Memory (LSTM) networks into the generator and discriminator of GANs to process continuous sequential data. Esteban et al. \cite{esteban_real-valued_2017} introduced a conditioning mechanism and developed the Recurrent Conditional Generative Adversarial Network (RCGAN), successfully achieving the synthesis of multidimensional real-valued time series using auxiliary label information. Yoon et al. \cite{yoon_time-series_2019} proposed TimeGAN, a framework that explicitly constrains the temporal dynamics of data within a jointly optimized latent space, thereby generating higher-fidelity samples. Paul et al. \cite{paul_psa-gan_2021} proposed PSA-GAN, which significantly improves the synthesis quality of long multivariate time series by incorporating a progressive growing strategy and self-attention mechanisms.

Due to the inherent instability of adversarial training, researchers began to explore other types of deep generative paradigms. For instance, TimeVAE, proposed by Desai et al. \cite{desai_timevae_2021}, implemented an interpretable temporal structure and achieved initial success in using VAEs for multivariate time series synthesis. HyVAE, proposed by Cai et al. \cite{cai_hybrid_2023}, integrated the joint learning of local patterns (e.g., seasonality and trend) and temporal dynamics of time series into a unified framework via variational inference. Wu et al. \cite{wu_k_2025} combined Koopman theory and Kalman filtering, utilizing KoopmanNet to transform nonlinear time series dynamics into a biased linear dynamical system, and employing KalmanNet to refine predictions and model uncertainties within this linear system. Additionally, energy-based models have been used to mimic the sequential behavior of time series through progressively decomposable structures. Alaa et al. \cite{alaa_generative_2021} proposed Fourier Flows based on normalizing flows and a series of spectral filters to achieve exact likelihood optimization.

\textbf{Time Series Diffusion Models.} Denoising Diffusion Probabilistic Models (DDPMs), as a novel class of generative models, have been widely applied to time series forecasting and imputation tasks. The pioneering work in this area was introduced by Rasul et al. \cite{rasul_autoregressive_2021}, whose designed TimeGrad combines autoregressive models with the denoising diffusion process to achieve multivariate probabilistic forecasting. Subsequently, CSDI, proposed by Tashiro et al. \cite{tashiro_csdi_2021}, constructed a score-based conditional diffusion framework that unified time series imputation and forecasting tasks through self-attention and a specialized masking strategy. As research deepened, to overcome the computational bottleneck of long sequence modeling, SSSD \cite{alcaraz_diffusion-based_2023} replaced the attention mechanism in traditional diffusion architectures with structured state space models (S4). TimeDiff, proposed by Shen et al. \cite{shen_multi-resolution_2024}, introduced two novel conditioning mechanisms: future mixup and autoregressive initialization, which significantly improved the forecasting quality of long sequences while effectively capturing complex temporal dynamics. For example, Yuan and Qiao \cite{yuan_diffusion-ts_2024} achieved highly interpretable time series generation by decoupling trend and seasonal priors. However, these methods generally couple joint-distribution learning with task-specific conditioning. In contrast, DynG-Diff combines unconditional pre-training with variable-aware inference guidance to address information heterogeneity in probabilistic multivariate time-series forecasting.

\textbf{Diffusion Guidance.} Classifier Guidance adds the gradient of an auxiliary classifier to the unconditional score during sampling \cite{dhariwal_diffusion_2021}. Classifier-Free Guidance (CFG) removes the auxiliary classifier by jointly learning conditional and unconditional score estimates and blending them with a user-defined guidance scale \cite{ho_classifier-free_2022}. Guidance has also been used for text-driven image generation and editing \cite{avrahami_blended_2022,nichol_glide_2021}, while constraint-based guidance expresses desired time-series properties as differentiable energy functions \cite{coletta_constrained_2023}. For time series, TSDiff conditions an unconditionally trained diffusion model through observation self-guidance during inference, without an auxiliary guidance network or changes to backbone training \cite{kollovieh_predict_2023}. Its observation-likelihood gradient is regulated by a globally shared scale. Feedback Guidance makes the guidance coefficient state- and time-dependent by feeding back the model's estimate of conditional-signal informativeness \cite{koulischer_feedback_2025}. However, it adapts guidance at the sample-trajectory level rather than estimating variable-wise observation reliability.

DynG-Diff is rooted in TSDiff's observation-likelihood formulation: it retains the unconditional backbone, the one-step estimate $\hat{x}^0$, and the inference-time likelihood gradient. It extends this formulation with a separately trained policy network, a self-supervised reliability target, and a variable- and timestep-specific matrix $A^t$. Unlike CFG and Feedback Guidance, DynG-Diff does not interpolate conditional and unconditional score estimates; unlike TSDiff, it does not apply one shared guidance strength to all observed variables.

\section{Preliminaries}
\label{sec:preliminaries}

\subsection{Problem statement}
\label{subsec:3.1}

This paper investigates the problem of probabilistic multivariate time series forecasting. Given a multivariate time series $y \in \mathbb{R}^{L \times D}$ of length $L$ containing $D$ variables. For the forecasting task, we introduce a binary observation mask $M \in \{0,1\}^{L \times D}$ with the same dimensions as $y$. The mask $M$ indicates the observation status of the data points: $M_{l,d} = 1$ denotes that the data at this position is a known historical observation, denoted as $y_{obs}$; whereas $M_{l,d} = 0$ indicates that the data is an unknown future value to be predicted, denoted as $y_{target}$. The known observations yobs and the unknown targets ytarget can be expressed as:
\begin{equation}
y_{obs} = M \odot y, \quad y_{target} = (1 - M) \odot y
\label{eq:1}
\end{equation}
where $\odot$ denotes the element-wise product. Unlike deterministic forecasting, which solely aims to find a mapping $f: y_{obs} \rightarrow \hat{y}_{target}$ to minimize the point error, the goal of probabilistic forecasting is to learn the conditional probability distribution of the target values given the observations, $p_{\theta}(y_{target}|y_{obs})$. During practical inference, the model approximates this distribution by generating a set of samples $\{\hat{y}^{s}\}_{s=1}^{S} \sim p_{\theta}(\cdot|y_{obs})$. Consequently, it not only provides the predictive mean but also quantifies the predictive uncertainty through the statistical properties of the samples, effectively capturing the stochasticity and coupling relationships of the multivariate system as it evolves over time.

\subsection{Denoising Diffusion Probabilistic Models}
\label{subsec:3.2}

Diffusion models \cite{sohl-dickstein_deep_2015,ho_denoising_2020} are a class of generative models based on non-equilibrium thermodynamics; their core idea encompasses two processes: forward noise addition and reverse denoising. The normalized multivariate time series $y$ is treated as the noise-free initial state of the diffusion process, explicitly denoted as $x^{0} \in \mathbb{R}^{L \times D}$. During the diffusion process, $x^{t} \in \mathbb{R}^{L \times D}$ is defined as the latent state variable at an arbitrary diffusion time step $t \in [1,T]$. The forward process is a fixed Markov chain, where the single-step transition probability is parameterized as $q(x^{t}|x^{t-1}) = \mathcal{N}(x^{t}; \sqrt{1 - \beta_{t}}x^{t-1}, \beta_{t}I)$, and $\beta_{t} \in (0,1)$ is the variance schedule parameter. Utilizing the reparameterization trick, the latent variable at any arbitrary timestep t can be sampled directly from the initial data $x^{0}$: $x^{t} = \sqrt{\bar{\alpha}_{t}}x^{0} + \sqrt{1 - \bar{\alpha}_{t}}\epsilon$, where $\alpha_{t} = 1 - \beta_{t}$, $\bar{\alpha}_{t} = \prod_{i=1}^{t} \alpha_{i}$, and $\epsilon \sim \mathcal{N}(0, I)$ is standard Gaussian noise.

The reverse process aims to learn the conditional probability distribution to reverse the noise addition process. Since the true reverse transition distribution is intractable, Ho et al. \cite{ho_denoising_2020} approximate the noise that needs to be removed at each step by training a neural network $\epsilon_{\theta}(x^{t}, t)$, whose training objective employs a simplified mean squared error (MSE) loss function:
\begin{equation}
\mathcal{L}_{simple} = \mathbb{E}_{x^{0}, \epsilon, t}[\|\epsilon - \epsilon_{\theta}(x^{t}, t)\|^{2}]
\label{eq:2}
\end{equation}

Notably, based on the noise $\epsilon_{\theta}(x^{t}, t)$ predicted by the model and the reparameterized $x^{t}$, we can derive the estimation of the original data at the current step:
\begin{equation}
\hat{x}^{0} = \frac{x^{t} - \sqrt{1 - \bar{\alpha}_{t}}\epsilon_{\theta}(x^{t}, t)}{\sqrt{\bar{\alpha}_{t}}}
\label{eq:3}
\end{equation}

This capability of real-time estimation of the original data $x^{0}$ during the generation process serves as the critical foundation for realizing observation-based dynamic guidance mechanisms. For more details, please refer to Appendix \ref{app:A}.

\subsection{Observation Guidance}
\label{subsec:3.3}

Unconditional diffusion models learn the joint distribution of the data $p(x)$ . In time series forecasting tasks, it is necessary to sample from the conditional distribution $p(x|y_{obs})$ \cite{kollovieh_predict_2023}. According to Bayes' theorem, the conditional score function can be decomposed into the sum of an unconditional score term and an observation likelihood guidance term \cite{dhariwal_diffusion_2021,kollovieh_predict_2023}:
\begin{equation}
\nabla_{x^t} \log p (x^t|y_{obs}) = \underbrace{\nabla_{x^t} \log p (x^t)}_{\text{Uncond. Score}} + \underbrace{\nabla_{x^t} \log p (y_{obs}|x^t)}_{\text{Observation Guidance}} \quad
\label{eq:4}
\end{equation}
Here, the first term is provided by the pre-trained denoising network $\epsilon_\theta(x^t, t)$, while the second term measures the consistency between the currently generated latent variable $x^t$ and the historical observations $y_{obs}$. Since directly computing $p(y_{obs}|x^t)$ is intractable, we can utilize the original data estimation $\hat{x}^0$ obtained from Eq. \eqref{eq:3} to approximate the true $x^0$ , thereby calculating the observation guidance loss $\mathcal{L}_{guide}$ . We then compute its gradient and inject it into the predicted noise. The modified noise prediction can be expressed as:
\begin{equation}
\hat{\epsilon} = \epsilon_\theta(x^t, t) - s \cdot \sqrt{1 - \bar{\alpha}_t} \cdot \nabla_{x^t} \mathcal{L}_{guide}(\hat{x}^0, y_{obs}) \quad
\label{eq:5}
\end{equation}
Here, $\mathcal{L}_{guide}$ depends on the assumed observation noise distribution, and $s$ is a global guidance scale shared by all observed variables. Equation \eqref{eq:5} therefore represents homogeneous observation self-guidance and serves as the starting point of DynG-Diff. Our method retains this posterior-score formulation but replaces homogeneous loss weighting with state-aware local precision.

\section{DynG-Diff: Variable-Sensitive Dynamic Guidance Diffusion}
\label{sec:method}

In this section, we present DynG-Diff, a novel probabilistic forecasting framework for multivariate time series, which aims to address the inherent information heterogeneity challenge in complex dynamical systems by combining an unconditional diffusion backbone with variable-sensitive adaptive guidance. As illustrated in Figure \ref{FIG:1}, the overall architecture of DynG-Diff is designed into three stages: the unconditional backbone network pre-training stage, the state-aware policy learning stage, and the dynamic guidance inference stage. This decoupled paradigm preserves the backbone's capability to fit joint distributions while enabling fine-grained interventions for different variables during inference.

\textbf{Relationship to Existing Guidance.} DynG-Diff is not a variant of CFG because it neither trains a conditional denoising branch nor blends conditional and unconditional score estimates. Its direct methodological starting point is the homogeneous observation self-guidance in Eq. \eqref{eq:5}.

Beyond this starting point, DynG-Diff adds four components: the State-Aware Policy Network $g_\phi$, a self-supervised reliability target derived from denoising error, the variable- and timestep-specific guidance matrix $A^t$, and an ALD-based weighted observation-likelihood objective. These additions transform a shared scalar into fine-grained local precision while preserving the unconditional backbone.

In the subsequent sections, we detail the core components and theoretical derivations of the DynG-Diff framework. First, serving as the generative foundation of the entire framework, the unconditional backbone network is independently optimized during the pre-training stage using a standard denoising diffusion objective to fit the underlying joint probability distribution of the multivariate time series. Once pre-training is complete, the frozen backbone network not only provides a powerful unconditional generative prior for subsequent inference sampling, but its real-time one-step denoising estimation output during intermediate diffusion steps also serves as the core data source for the subsequent policy network to quantify the system's recovery state and variable reliability. For more details on the pre-training of the unconditional backbone network, please refer to Appendix \ref{subapp:B.1}. Subsequently, in Section \ref{subsec:4.1}, we elaborate on the design motivation and specific architecture of the state-aware policy network, explaining how it utilizes the real-time noisy state and one-step denoising estimation during the diffusion process to quantify the reliability of different variables and generate the dynamic weight matrix. Finally, in Section \ref{subsec:4.2}, we introduce the variable-sensitive guidance mechanism, proving from a probabilistic perspective how dynamic weights influence the model's tolerance to observation errors, and we derive the guidance loss based on the weighted observation likelihood in detail.

\begin{figure}[!t]
	\centering
	\includegraphics[width=.86\linewidth]{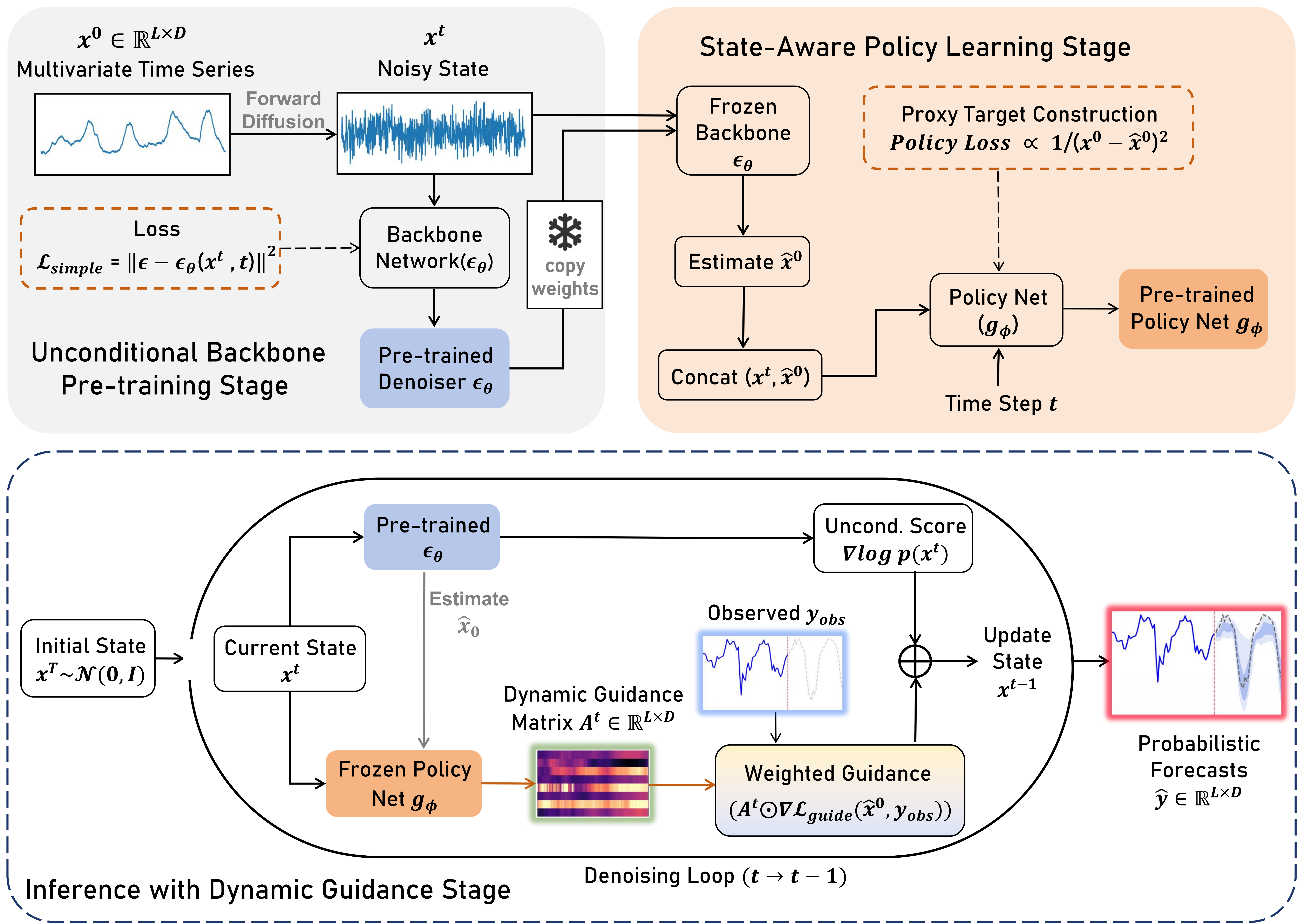}
	\caption{Overall architecture of DynG-Diff. The framework comprises three core stages: (1) Unconditional Backbone Pre-training Stage: the backbone denoising network is trained using the standard diffusion loss $\mathcal{L}_{simple}$ to capture the underlying joint probability distribution of the multivariate time series $x^0 \in \mathbb{R}^{L \times D}$. (2) State-Aware Policy Learning Stage: the pre-trained denoising network $\epsilon_\theta$ is frozen. Its one-step estimate $\hat{x}^0$ is concatenated with the current noisy state $x^t$ and fed into the policy network $g_\phi$. Using the reconstructed inverse error as a proxy target, the policy network learns and outputs a dynamic guidance strength matrix $A^t \in \mathbb{R}^{L \times D}$ tailored to each diffusion timestep and variable. (3) Inference with Dynamic Guidance Stage: at each reverse diffusion step, the frozen policy network $g_\phi$ recomputes $A^t$. Its detached copy $\bar{A}^t=\operatorname{sg}(A^t)$ modulates the observation-guidance gradient in a variable-wise manner.}
	\label{FIG:1}
\end{figure}

\subsection{State-Aware Dynamic Weight Generation}
\label{subsec:4.1}

Real-world complex multivariate systems often exhibit significant information heterogeneity, with different variables frequently displaying vastly different dynamic evolution patterns. Based on this, we propose a state-aware dynamic weight generation mechanism aimed at adaptively allocating fine-grained guidance weights for different variables.

\textbf{Policy Network Design.} To endow the model with this dynamic perception capability, we introduce an auxiliary lightweight policy network alongside the unconditional diffusion backbone. Its overall architecture and learning process are illustrated in Figure \ref{FIG:2}. The core objective of this network is to accurately represent the recovery state of the system at the current diffusion timestep and subsequently infer the relative reliability of each variable. Since the true noise-free historical data $x^0$ is unobservable during the inference stage, the model must quantify the current generation quality based solely on intermediate states. Given the progressive denoising nature of diffusion models, the current noisy state $x^t$ and the one-step denoising estimation $\hat{x}^0(x^t, t)$ output by the backbone network (derived from Eq. \eqref{eq:3}) contain rich local evolution information. Therefore, we transpose both tensors and concatenate them along the channel dimension to construct the composite state representation of the current system, $\mathcal{S}^t = [(x^t)^T \parallel (\hat{x}^0)^T] \in \mathbb{R}^{2D \times L}$. This state representation strategy not only preserves the current true noise distribution of the observation sequence but also fully integrates the unconditional prior knowledge already learned by the backbone model. Meanwhile, to equip the policy network with time-awareness, we map the discrete diffusion timestep $t$ into a continuous high-dimensional vector via sinusoidal positional encoding, and extract the time embedding feature $e^t \in \mathbb{R}^{d_t}$ through a Multi-Layer Perceptron (MLP).

Regarding the network architecture design, considering the stringent requirements of time series forecasting on inference speed and computational overhead, the policy network employs a series of lightweight one-dimensional convolutional (1D-CNN) layers as the core feature extractor. The input state $\mathcal{S}^t$ first passes through non-linear convolutional layers to extract cross-variable local correlations and temporal features, generating a high-dimensional hidden state representation. Subsequently, the time embedding $e^t$ is deeply fused with the hidden state along the feature dimension and fed into a Weight Prediction Head composed of activation functions and 1D convolutions. Assuming the network's hidden variable output before the final layer mapping is $H^t \in \mathbb{R}^{D \times L}$ (where $D$ is the variable dimension and $L$ is the sequence length), to ensure that the generated weights reasonably represent the relative "state confidence" among variables and to prevent the introduction of dynamic weights from shifting the global gradient scale, we adopt a weight generation strategy based on exponential mapping and mean normalization. Specifically, the hidden variable is first mapped to a relative score matrix via a 1D convolutional layer:
\begin{equation}
Z^t = W_{head} \ast H^t + b_{head} \quad
\label{eq:6}
\end{equation}
where $W_{head}$ and $b_{head}$ are the weights and biases of the prediction head's convolutional layer, respectively. Subsequently, an exponential operation is applied to ensure the non-negativity of the weights, followed by normalization by dividing by the global mean:
\begin{equation}
A^t = \frac{\exp (Z^t)}{\frac{1}{D \cdot L} \sum_{i=1}^{D} \sum_{j=1}^{L} \exp ((Z^t)_{i,j}) + \epsilon} \quad
\label{eq:7}
\end{equation}
where $\epsilon$ is a tiny constant to prevent division by zero, and $A^t$ is the final generated non-negative guidance strength matrix, with a global mean approximately equal to 1. For its probabilistic interpretation, we introduce the following modeling definition:

\begin{definition}[Dynamic Local Precision]
Any element $a_{i,j}^{t}$ in the dynamic guidance strength matrix $A^{t}$ is interpreted as a plug-in estimate of the reciprocal scale of an Asymmetric Laplace Distribution (ALD), i.e., the local precision. It represents the model's inverse error tolerance for variable $i$ at time point $j$ and diffusion timestep $t$.
\label{def:1}
\end{definition}

Based on Definition \ref{def:1}, the generated weight matrix $A^t$ serves not merely as an attention mask for feature selection, but possesses explicit physical and probabilistic interpretability.

\textbf{Proxy Target Construction.} To enable the policy network to accurately learn the latent local precision described in Definition \ref{def:1}, we design an uncertainty-aware self-supervised training objective. Since the absolute "true reliability" labels for each variable at every intermediate timestep are inaccessible during the diffusion process, we transform the residual between the backbone model's one-step denoising estimation $\hat{x}^0(x^t, t)$ and the true noise-free signal $x^0$ into local precision, constructing a proxy supervision signal based on this.

Specifically, we first compute the squared error between the denoising estimation and the true signal, mapping it to log-precision:
\begin{equation}
P_{raw}^t = -\log ((\hat{x}^0(x^t, t) - x^0)^{\odot 2} + \epsilon) \quad
\label{eq:8}
\end{equation}
where $\epsilon$ is a tiny constant to prevent numerical underflow. Because the absolute magnitude of errors spans vastly across different diffusion stages, directly fitting the absolute precision would cause severe gradient instability in the network. Therefore, we apply Z-Score normalization to $P_{raw}^t$ across spatial and variable dimensions, followed by threshold clipping, to construct a smooth relative confidence target $Y_{target}^t$:
\begin{equation}
Y_{target}^t = Clip\left( \frac{P_{raw}^t - \mu_P}{\sigma_P}, -3, 3 \right) \quad
\label{eq:9}
\end{equation}
where $\mu_P$ and $\sigma_P$ are the mean and standard deviation of the log-precision within the current batch, respectively. Finally, the policy network $g_\phi$ fits this standardized relative confidence target using MSE as the loss function:
\begin{equation}
\mathcal{L}_{policy} = \mathbb{E}_{t, x^0, \epsilon} \left[ \frac{1}{D \cdot L} \sum_{i=1}^{D} \sum_{j=1}^{L} \| (Z^t)_{i,j} - (Y_{target}^t)_{i,j} \|_2^2 \right] \quad
\label{eq:10}
\end{equation}

Under this optimization objective, the policy network's output $Z^t$ represents the relative log-precision of the recovery state for each variable at the current timestep, which is then processed through Eq. \eqref{eq:7} to generate the non-negative dynamic guidance strength matrix $A^t$. The generated guidance strength matrix $A^t$ possesses clear physical significance and probabilistic interpretability. Higher weights in the matrix indicate that the current generation state of the corresponding variable is highly reliable and exhibits clear trend features; thus, strong gradient guidance should be applied in subsequent sampling to force it into strict alignment with the conditional distribution. Conversely, lower weight values imply that the current variable is still in a noise-dominated state with high uncertainty. The network will grant it greater tolerance, allowing the diffusion backbone model to freely explore relying on the unconditional generative prior, thereby avoiding adversarial gradients caused by enforcing strong guidance on noisy estimations. After obtaining this crucial dynamic weight matrix $A^t$, subsequent sections will detail how to seamlessly integrate it into the Bayes' theorem-based diffusion sampling to drive differentiated gradient updates.

\begin{figure}[!t]
	\centering
	\includegraphics[width=.86\linewidth]{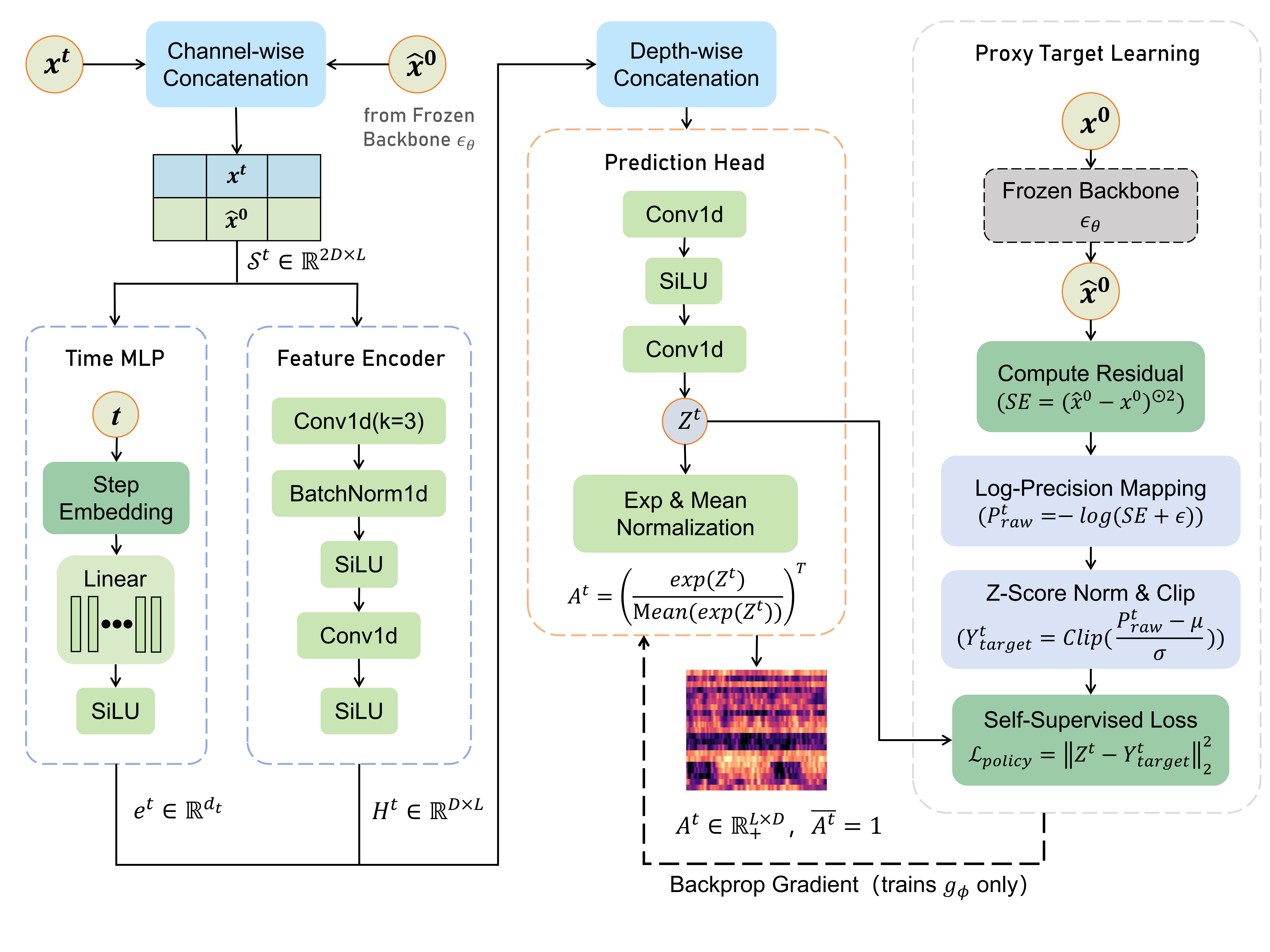}
	\caption{Schematic of the State-Aware Policy Network architecture. The network takes the concatenated features of the current noisy state $x^t$ and the one-step estimate $\hat{x}^0$ as input, combines them with the timestep embedding, and outputs the dynamic weight matrix $A^t$. During training, the diffusion backbone remains frozen. The model computes the log-precision from the squared error between $\hat{x}^0$ and the ground truth $x^0$ and, after Z-score normalization, constructs a smooth proxy target $Y_{target}^t$ to update the policy network parameters through the loss $\mathcal{L}_{policy}$.}
	\label{FIG:2}
\end{figure}

\subsection{Variable-Sensitive Guidance Mechanism}
\label{subsec:4.2}

Recent advances in the field of time series generation \cite{kollovieh_predict_2023} indicate that by introducing observation likelihood guidance during inference, unconditional diffusion models can flexibly adapt to various downstream forecasting and imputation tasks without explicit conditional training. However, existing self-guiding diffusion frameworks are either limited to modeling univariate time series or, when directly generalized to multivariate dynamical systems, ignore the distinctly different dynamic evolution patterns and noise levels inherent in different variables at the same timestep. If gradient guidance of equal strength is applied to all channels, the model is highly prone to producing adversarial gradients on variables containing extreme anomalies or high-noise disturbances, thereby severely degrading the generation quality of the overall multivariate joint distribution. To this end, we propose a variable-sensitive self-guidance mechanism. This mechanism integrates the dynamic guidance strength matrix $A^t$ output by the policy network into the observation-likelihood gradient, providing a local probabilistic interpretation for fine-grained sampling in multivariate systems.

\textbf{Conditional Probability Modeling.} In probabilistic time series forecasting tasks, the model is required not only to provide point estimates but also to accurately delineate predictive uncertainty. Considering that widely used evaluation metrics in this domain (such as the Continuous Ranked Probability Score, CRPS) heavily rely on the quantile loss, we adopt the ALD to model the conditional observation distribution $p(y_{obs}|x^t)$ given the latent variable $x^t$. For theoretical details of ALD modeling, please refer to Appendix \ref{subapp:B.2}.

At each reverse diffusion step, the policy network recomputes $A^t$ from the current state. For likelihood-gradient evaluation, we use $\bar{A}^t=\operatorname{sg}(A^t)$, where $\operatorname{sg}(\cdot)$ denotes the stop-gradient operator. Thus, $\nabla_{x^t}\bar{A}^t=0$. The matrix remains dynamic across reverse steps but is treated as fixed only during differentiation at the current step.

For a single data point $(l, d)$ in a multivariate sequence, the one-step estimation of the original data provided by the backbone network at the current timestep $t$ can be obtained from Eq. \eqref{eq:3}. Under this estimation, the conditional probability density function of the true observation $y_{l,d}$ can be defined as:
\begin{equation}
p(y_{l,d}|x^t; \kappa, \bar{a}_{l,d}^t) \propto \exp(- \bar{a}_{l,d}^t \cdot \rho_\kappa(y_{l,d} - \hat{x}_{l,d}^0)) \quad
\label{eq:11}
\end{equation}
where $\kappa \in (0,1)$ is the specified quantile level, $\rho_\kappa(e) = \max(\kappa \cdot e, (\kappa - 1) \cdot e)$ is the asymmetric quantile loss function, and $\bar{a}_{l,d}^t$ is the corresponding stop-gradient element of $\bar{A}^t$. We interpret $\bar{a}_{l,d}^t$ as a plug-in local precision, i.e., the reciprocal of the scale parameter of the Laplace distribution. A large weight produces a sharper local likelihood and stronger alignment with the observation. A small weight produces a flatter likelihood, allowing uncertain variables to rely more on the unconditional prior.

During inference, trajectory $s$ uses $\kappa_s=s/(S+1)$, $s=1,\ldots,S$. The $S$ trajectories form an equally weighted $\kappa$-conditioned ensemble rather than exact conditional quantiles.

\textbf{Joint Conditional Probability and Guidance Loss.} To achieve global guidance in multivariate systems, we assume that given the local estimations, the individual observed variables are conditionally independent. Thus, the joint conditional probability density of the multivariate time series can be expressed as the product of the individual univariate densities:
\begin{equation}
p(y_{obs}|x^t;\bar{A}^t) = \prod_{(l,d):M_{l,d}=1} p(y_{l,d}|x^t; \kappa, \bar{a}_{l,d}^t) \quad
\label{eq:12}
\end{equation}
where $M \in \{0,1\}^{L \times D}$ is the binary observation mask defined in Section \ref{subsec:3.1}. The conditional independence assumption does not remove cross-variable information from the guidance weights. Before detachment, $A^t$ is inferred through cross-channel feature fusion with a global receptive field, so each plug-in precision can encode system-wide coupling and relative reliability.

Taking the negative logarithm of the joint conditional probability density converts the product into a summation. Discarding constant terms irrelevant to $x^t$, we derive the core energy function of the guided diffusion process—the Weighted Quantile Guidance Loss (WQ-Loss):
\begin{align}
\begin{split}
-\log p(y_{obs}|x^t;\bar{A}^t) &= \mathcal{L}_{guide}(\hat{x}^0,y_{obs};\bar{A}^t)+C(\bar{A}^t,\kappa), \\
\mathcal{L}_{guide}(\hat{x}^0,y_{obs};\bar{A}^t) &= \sum_{l=1}^{L} \sum_{d=1}^{D} M_{l,d} \cdot \bar{a}_{l,d}^t \cdot \rho_\kappa(y_{l,d} - \hat{x}_{l,d}^0) \quad
\end{split}
\label{eq:13}
\end{align}

Here, $C(\bar{A}^t,\kappa)$ collects terms constant with respect to $x^t$ during the current update. The matrix $\bar{A}^t$ acts as an adaptive weighting mask, and Eq. \eqref{eq:5} uses the resulting local likelihood gradient. The following theorem gives this gradient under the stated stop-gradient convention.

\begin{theorem}[Variable-Sensitive Guidance Gradient]
Given $y_{obs}$ and $M\in\{0,1\}^{L\times D}$, assume the local observation probability follows an ALD with quantile level $\kappa\in(0,1)$. For the fixed current-step matrix $\bar{A}^t$, satisfying $\nabla_{x^t}\bar{A}^t=0$, the gradient of the Weighted Quantile Guidance Loss is given by:
\begin{align}
\begin{split}
\nabla_{x^{t}}\mathcal{L}_{guide}(\hat{x}^{0}, y_{obs};\bar{A}^t) &= \sum_{l=1}^{L} \sum_{d=1}^{D} M_{l,d} \cdot \bar{a}_{l,d}^{t} \\
&\quad \cdot (I(y_{l,d} < \hat{x}_{l,d}^{0}) - \kappa) \cdot \nabla_{x^{t}}\hat{x}_{l,d}^{0}
\end{split}
\label{eq:14}
\end{align}
where $I(\cdot)$ denotes the indicator function.
\label{thm:1}
\end{theorem}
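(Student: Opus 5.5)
The argument is a direct chain-rule computation on the definition of $\mathcal{L}_{guide}$ in Eq.~\eqref{eq:13}, using the stop-gradient convention to treat $\bar{A}^t$ as a constant.

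First, by linearity of the gradient, we pull $\nabla_{x^t}$ inside the finite double sum over $(l,d)$. The mask entries $M_{l,d}$ and the observations $y_{l,d}$ do not depend on $x^t$. By hypothesis, $\nabla_{x^t}\bar{A}^t=0$, so each factor $\bar{a}^t_{l,d}$ also passes through the gradient as a scalar. This reduces the problem to computing $\nabla_{x^t}\rho_\kappa(y_{l,d}-\hat{x}^0_{l,d})$ for a single entry.

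Next, we differentiate the check function. Write $e=y_{l,d}-\hat{x}^0_{l,d}$. Then $\rho_\kappa(e)=\kappa e$ for $e>0$ and $(\kappa-1)e$ for $e<0$, so $\rho_\kappa'(e)=\kappa-I(e<0)$ for $e\neq 0$. Applying the chain rule, with $\partial e/\partial\hat{x}^0_{l,d}=-1$, gives
\[
\nabla_{x^t}\rho_\kappa(e)=-\bigl(\kappa-I(y_{l,d}<\hat{x}^0_{l,d})\bigr)\nabla_{x^t}\hat{x}^0_{l,d}=\bigl(I(y_{l,d}<\hat{x}^0_{l,d})-\kappa\bigr)\nabla_{x^t}\hat{x}^0_{l,d}.
\]
Here $\nabla_{x^t}\hat{x}^0_{l,d}$ exists because, by Eq.~\eqref{eq:3}, $\hat{x}^0$ is a differentiable function of $x^t$ whenever $\epsilon_\theta$ is differentiable. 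Explicitly,
\[
\nabla_{x^t}\hat{x}^0_{l,d}=\bar{\alpha}_t^{-1/2}\bigl(e_{l,d}-\sqrt{1-\bar{\alpha}_t}\,\nabla_{x^t}(\epsilon_\theta)_{l,d}\bigr),
\]
but the statement leaves this factor in its general form, so there is no need to expand it. Substituting back into the sum yields Eq.~\eqref{eq:14}.

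The only delicate point, which I expect to be the main obstacle, is the kink of $\rho_\kappa$ at $e=0$, where the function is not differentiable. I would handle it by interpreting Eq.~\eqref{eq:14} as the gradient wherever $y_{l,d}\neq\hat{x}^0_{l,d}$ for all observed $(l,d)$. At the kink, the stated formula corresponds to the particular subgradient $-\kappa\in[\kappa-1,\kappa]$, matching the convention $I(0<0)=0$ that automatic differentiation uses. Under continuous noise, the tie set $\{x^t:\hat{x}^0_{l,d}(x^t)=y_{l,d}\}$ is generically of measure zero, so the formula holds almost everywhere. I would state this caveat explicitly rather than claim classical differentiability everywhere.
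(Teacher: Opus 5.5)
Your proposal is correct and follows the paper's proof essentially step for step. You pull $\nabla_{x^t}$ through the masked sum with stop-gradient weights, use $\partial\rho_\kappa/\partial e=\kappa-I(e<0)$, and apply the chain rule with $\nabla_{x^t}y_{l,d}=0$; the explicit Jacobian of $\hat{x}^0$ and the almost-everywhere caveat at the kink are extra care the paper omits, since it simply calls $\kappa-I(e<0)$ a subgradient.

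There is one small slip in that caveat. The interval $[\kappa-1,\kappa]$ is the subdifferential with respect to $e$, and the element selected there is $\kappa$. With respect to $\hat{x}^0_{l,d}$ the subdifferential is $[-\kappa,1-\kappa]$, and that is the interval containing $-\kappa$; as you wrote it, $-\kappa\in[\kappa-1,\kappa]$ fails for $\kappa>1/2$.
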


\begin{proof}
Based on the definition of the weighted quantile guidance loss in Eq. \eqref{eq:13}, the objective function is essentially the weighted sum of asymmetric quantile losses $\rho_\kappa(e)$ for each data point. For a single data point $(l, d)$, let the residual error be $e = y_{l,d} - \hat{x}_{l,d}^0$.

First, the subgradient of the asymmetric quantile loss function $\rho_\kappa(e)$ with respect to the residual $e$ can be equivalently expressed using the indicator function $I(\cdot)$ as:
\begin{equation}
\frac{\partial \rho_\kappa(e)}{\partial e} = \kappa - I(e < 0) \quad
\label{eq:15}
\end{equation}

Second, according to the backpropagation mechanism, we apply the chain rule to compute the gradient of this local loss with respect to the latent state $x^t$:
\begin{equation}
\nabla_{x^t} \rho_\kappa(y_{l,d} - \hat{x}_{l,d}^0) = \frac{\partial \rho_\kappa(e)}{\partial e} \cdot \nabla_{x^t}(y_{l,d} - \hat{x}_{l,d}^0) \quad
\label{eq:16}
\end{equation}

Since the true observation $y_{l,d}$ is a given constant, its gradient with respect to $x^t$ is $\nabla_{x^t}y_{l,d} = 0$. Substituting this yields:
\begin{align}
\begin{split}
\nabla_{x^t}\rho_\kappa(y_{l,d} - \hat{x}_{l,d}^0) &= (\kappa - I(y_{l,d} - \hat{x}_{l,d}^0 < 0)) \cdot (-\nabla_{x^t}\hat{x}_{l,d}^0) \\&= (I(y_{l,d} < \hat{x}_{l,d}^0) - \kappa) \cdot \nabla_{x^t}\hat{x}_{l,d}^0 \quad
\end{split}
\label{eq:17}
\end{align}

Finally, substituting this expression into Eq. \eqref{eq:13} and summing over positions and variables, with the fixed current-step weights $\bar{a}_{l,d}^t$, completes the proof.
\end{proof}

As derived in Theorem \ref{thm:1}, the quantile error signal $(I(\cdot)-\kappa)$ determines the direction and quantile bias of the gradient. The fixed current-step weight $\bar{a}_{l,d}^t$ scales this signal, while $\nabla_{x^t}\hat{x}_{l,d}^0$ maps it back to the latent sampling direction. Substitution into Eq. \eqref{eq:5} yields variable-sensitive guidance without jointly retraining the unconditional backbone.

\noindent\textbf{Relation to Homogeneous Guidance.} When $\bar{A}^t=\mathbf{1}$, Eq. \eqref{eq:13} reduces to a homogeneous quantile observation loss. Substitution into Eq. \eqref{eq:5} recovers observation self-guidance with one globally shared scale $s$. Thus, homogeneous observation guidance is a special case of DynG-Diff, whereas the recomputed $A^t$ generalizes it to variable-, position-, and diffusion-timestep-specific regulation.

\section{Experiments}
\label{sec:experiments}

\subsection{Experimental setup}
\label{subsec:5.1}

\textbf{Datasets.} We selected six widely used, publicly available real-world multivariate time series benchmark datasets covering complex dynamical systems across diverse domains, including energy, economics, meteorology, and transportation. Specifically, these include: ETTh1\footnote{The ETTh1 dataset was acquired at \url{https://github.com/zhouhaoyi/ETDataset}.}, Exchange\footnote{The Exchange dataset was acquired at \url{https://github.com/laiguokun/multivariate-time-series-data}.}, Weather\footnote{The Weather dataset was acquired at \url{https://www.bgc-jena.mpg.de/wetter/}.}, Appliance\footnote{The Appliance dataset was acquired at \url{https://archive.ics.uci.edu/dataset/374/appliances+energy+prediction}.}, Solar\footnote{The Solar dataset was acquired at \url{https://github.com/laiguokun/multivariate-time-series-data/tree/master/solar-energy}.}, and Traffic\footnote{The Traffic dataset was acquired at \url{https://pems.dot.ca.gov/}.}. The detailed attributes of these datasets are summarized in Table \ref{tab:dataset}. To thoroughly evaluate the model's forecasting capability and stability across different time spans, we set both the historical context length $H$ and the prediction horizon length $L$ within the range of $\left\{96, 168, 336, 720\right\}$ for all datasets.

\begin{table}[!t]
    \centering
    \caption{Detailed information of the datasets. The split sizes denote the total number of time steps in the training, validation, and test sets, respectively.}
    \label{tab:dataset}
    \small
    \begin{tabular}{l c c c l}
        \toprule
        Dataset & Dimension & Frequency & Split size & Field \\
        \midrule
        ETTh1 & 7 & Hourly & (8361, 2091, 6968) & Energy \\
        Exchange & 8 & Daily & (4248, 849, 2276) & Finance \\
        Weather & 21 & 10min & (29509, 7377, 15809) & Weather \\
        Appliance & 28 & 10min & (11051, 2763, 5920) & Energy \\
        Solar-Energy & 137 & 10min & (33638, 8410, 10512) & Energy \\
        Traffic & 862 & Hourly & (9824, 2456, 5263) & Traffic \\
        \bottomrule
    \end{tabular}
\end{table}

\textbf{Baselines.} To evaluate the proposed framework and the effectiveness of the dynamic guidance mechanism, we compared DynG-Diff against six representative diffusion-based probabilistic time series forecasting methods, including TimeGrad \cite{rasul_autoregressive_2021}, CSDI \cite{tashiro_csdi_2021}, SSSD \cite{alcaraz_diffusion-based_2023}, TimeDiff \cite{shen_multi-resolution_2024}, TMDM \cite{li_transformer-modulated_2024}, and D3U \cite{li_diffusion-based_2025}.

\textbf{Evaluation Metrics.} We use the Continuous Ranked Probability Score (CRPS) and Mean Squared Error (MSE). The $S$ trajectories $x^{(s)}$ define the empirical predictive distribution $\hat{F}_S=S^{-1}\sum_{s=1}^{S}\delta_{x^{(s)}}$ and point forecast $\hat{y}=S^{-1}\sum_{s=1}^{S}x^{(s)}$.
\begin{equation}
MSE=\frac{1}{N\times D}\sum_{i=1}^{N}\sum_{j=1}^{D}(y_{i,j}-\hat{y}_{i,j})^{2}
\label{eq:18}
\end{equation}
\begin{equation}
CRPS(\hat{F}_S,y)=\frac{1}{S}\sum_{s=1}^{S}|x^{(s)}-y|-\frac{1}{2S^2}\sum_{s=1}^{S}\sum_{r=1}^{S}|x^{(s)}-x^{(r)}|
\label{eq:19}
\end{equation}
where $y_{i,j}$ and $\hat{y}_{i,j}$ are the observation and point forecast at time step $i$ and variable $j$. CRPS is evaluated from the equally weighted empirical ensemble and averaged across all forecast locations and variables.

\textbf{Implementation Details.} DynG-Diff is optimized via a two-stage separated training strategy. The unconditional diffusion backbone and the state-aware policy network are trained for 300 and 30 epochs, respectively, with a learning rate of 0.001. The forward diffusion employs 100 timesteps, a time embedding dimension of 128, and a linear noise schedule ($\beta_{1}=10^{-4}$ to $\beta_{100}=10^{-1}$). During inference, we generate $S=100$ trajectories per prediction horizon and assign $\kappa_s=s/(S+1)$ to trajectory $s$. The resulting ensemble defines $\hat{F}_S$. All experiments are conducted on a single NVIDIA V100 32GB GPU. The core hyperparameters are summarized in Table \ref{tab:hyperparameters}.

\begin{table}[!t]
    \centering
    \caption{Hyperparameters of DynG-Diff.}
    \label{tab:hyperparameters}
    \small
    \begin{tabular}{l c}
        \toprule
        Hyperparameter & Value \\
        \midrule
        Backbone epochs & 300 \\
        Policy epochs & 30 \\
        Learning rate & 0.001 \\
        Time embedding dim & 128 \\
        Diffusion steps & 100 \\
        $\beta$ scheduler & Linear \\
        $\beta_{1}$ & 0.0001 \\
        $\beta_{100}$ & 0.1 \\
        \bottomrule
    \end{tabular}
\end{table}

Due to varying variable dimensions and GPU memory constraints across datasets, we adaptively adjust specific configurations (Table \ref{tab:train_config}). Specifically, the global guidance scale is fine-tuned according to the dataset's signal-to-noise ratio to optimally balance the unconditional prior and observation guidance. Low-dimensional datasets (e.g., ETTh1, Exchange) utilize a lightweight backbone (hidden dimension 64, 3 S4 blocks), whereas high-dimensional datasets (e.g., Solar, Traffic) require higher capacity.

\begin{table}[!t]
    \centering
    \caption{Specific configurations for different datasets. $L$ denotes the prediction horizon length.}
    \label{tab:train_config}
    \small
    \begin{tabular}{l c c c c c c}
        \toprule
        Dataset & Horizon ($L$) & Guidance scale & Samples & Hidden dim & S4 blocks & Batch size \\
        \midrule
        ETTh1     & All & 3 & 100 & 64  & 3 & 128 \\
        Exchange  & All & 4 & 100 & 64  & 3 & 256 \\
        Weather   & All & 2 & 100 & 128 & 6 & 32 \\
        Appliance & All & 2 & 100 & 128 & 6 & 64 \\
        \cmidrule{1-7}
        \multirow{2}{*}{Solar} & 96 & \multirow{2}{*}{5} & \multirow{2}{*}{50} & \multirow{2}{*}{128} & \multirow{2}{*}{6} & 32 \\
        & $\ge 168$ & & & & & 16 \\
        \cmidrule{1-7}
        \multirow{2}{*}{Traffic} & $\le 336$ & \multirow{2}{*}{1} & 100 & \multirow{2}{*}{512} & \multirow{2}{*}{3} & 8 \\
        & 720 & & 70 & & & 4 \\
        \bottomrule
    \end{tabular}
\end{table}

\subsection{Main results}
\label{subsec:5.2}
In this section, we conduct a comprehensive quantitative comparison between the proposed DynG-Diff framework and representative state-of-the-art (SOTA) multivariate time series probabilistic forecasting methods in terms of point forecasting (MSE) and probabilistic forecasting (CRPS) metrics. Table \ref{tab:main_results} demonstrates the highly competitive performance of DynG-Diff across six real-world benchmark datasets. Particularly on the ETTh1 and Appliance datasets (as visualized in Figures \ref{fig:3} and \ref{fig:4}), which feature complex heterogeneity, DynG-Diff achieves average CRPS values of 0.321 and 0.395, respectively. These results are significantly superior to both the strong diffusion-based baseline TMDM (0.454 and 0.574, respectively) and the D3U model employing the latest decoupled architecture (0.425 and 0.563, respectively).

\begin{figure}[!t]
    \centering
    \includegraphics[width=0.48\textwidth]{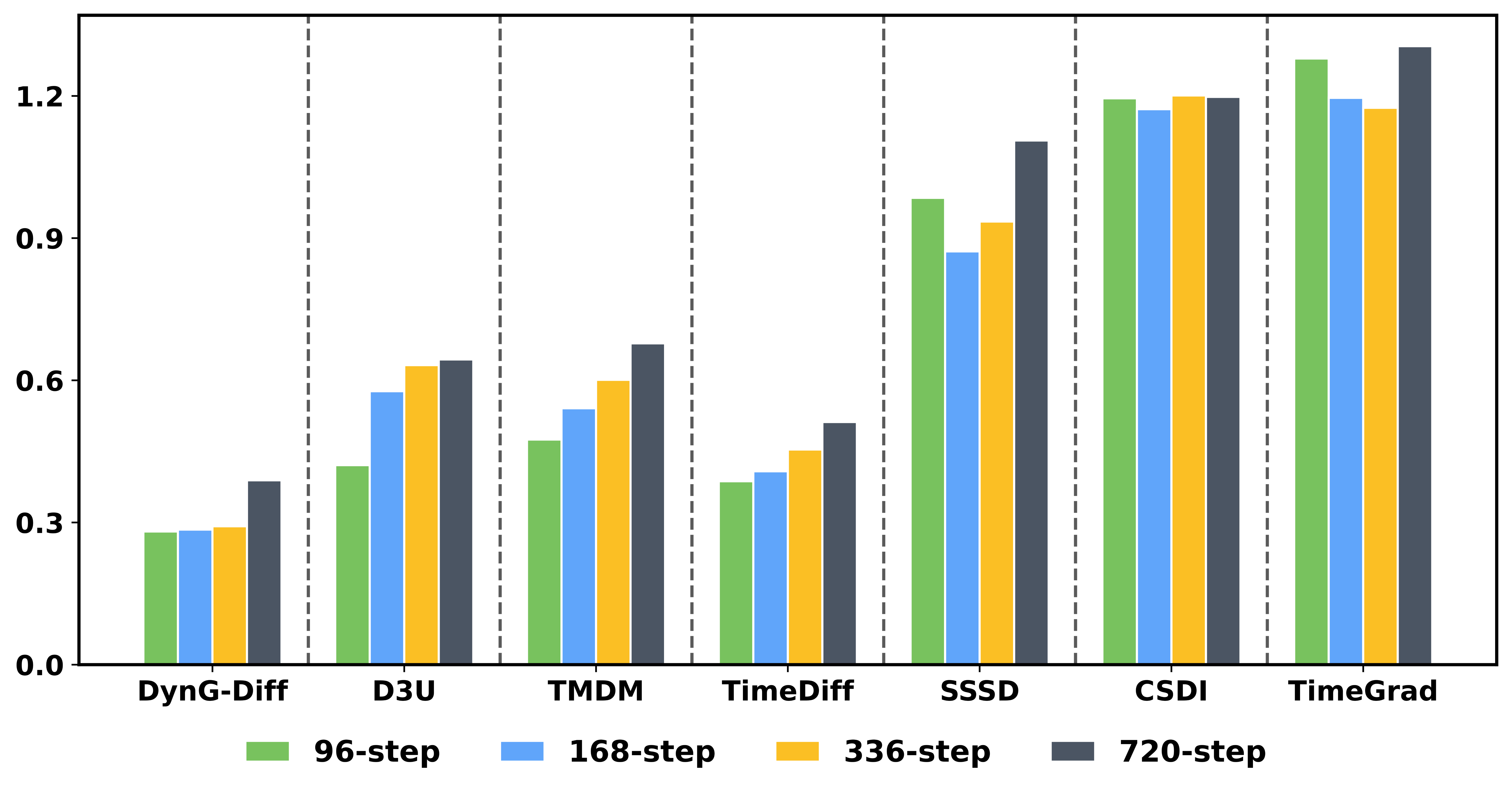}
    \hfill
    \includegraphics[width=0.48\textwidth]{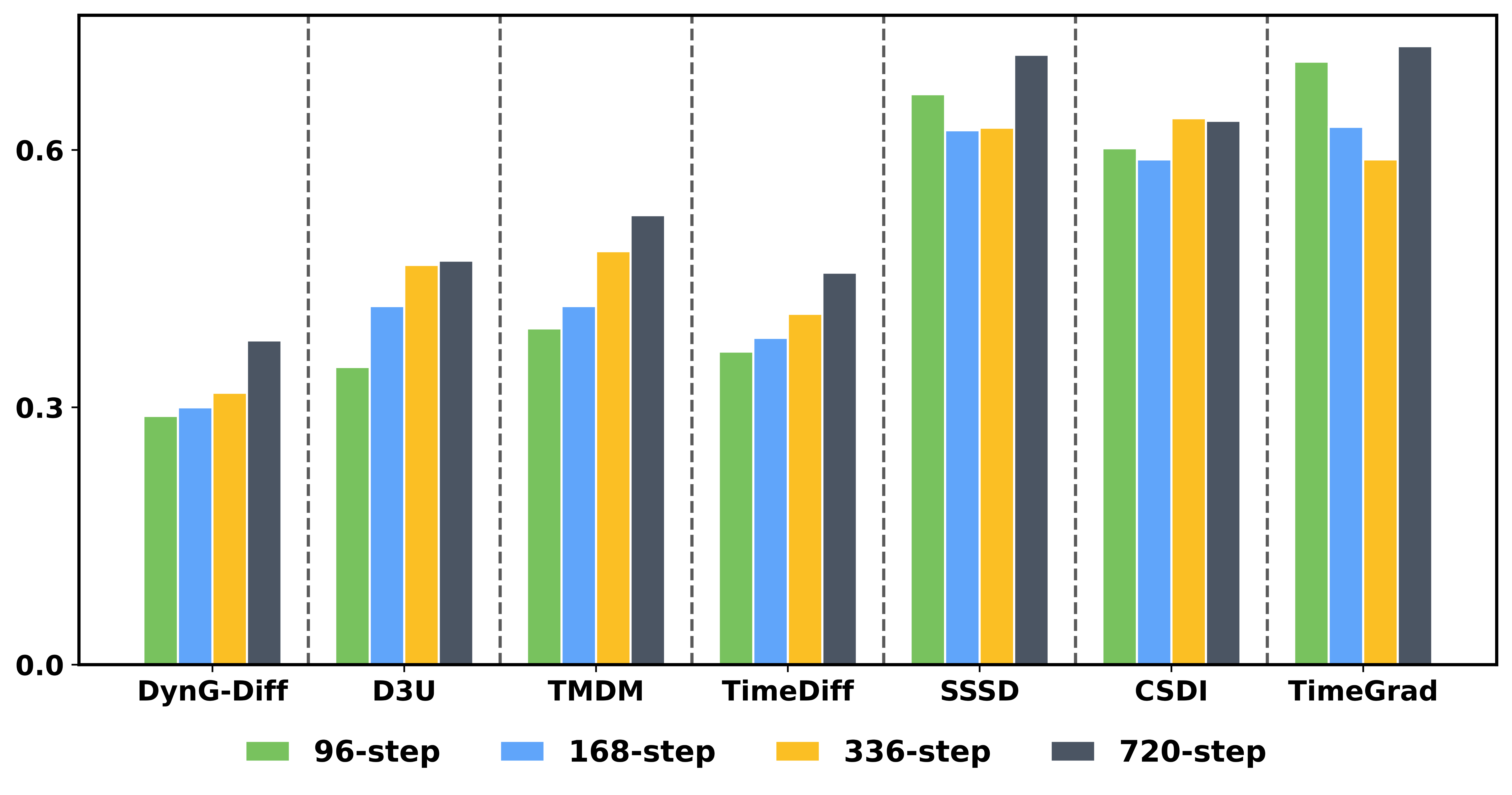}
    \caption{Forecasting results on the ETTh1 dataset: MSE (left) and CRPS (right).}
    \label{fig:3}

    \vspace{0.6em}

    \includegraphics[width=0.48\textwidth]{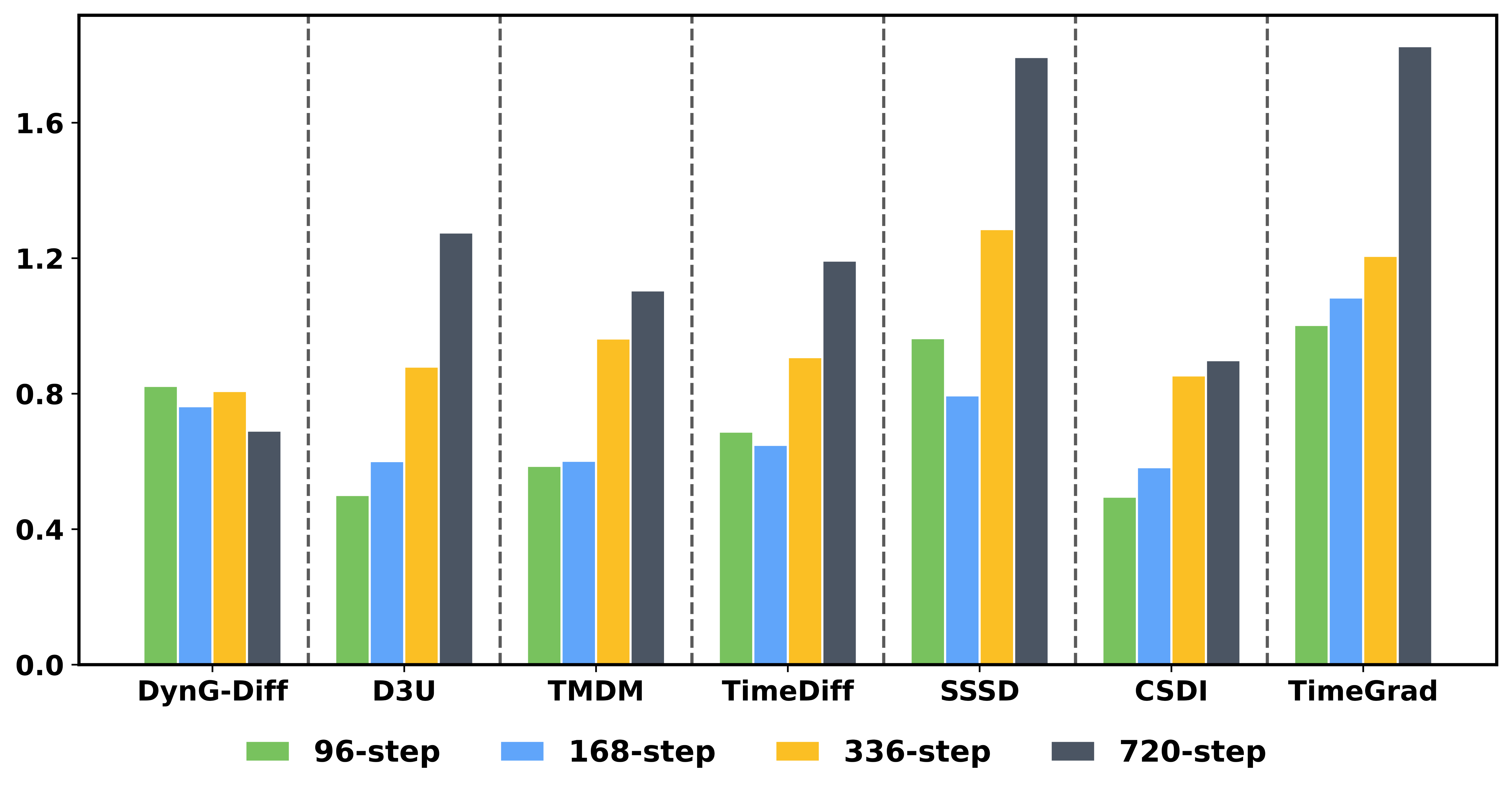}
    \hfill
    \includegraphics[width=0.48\textwidth]{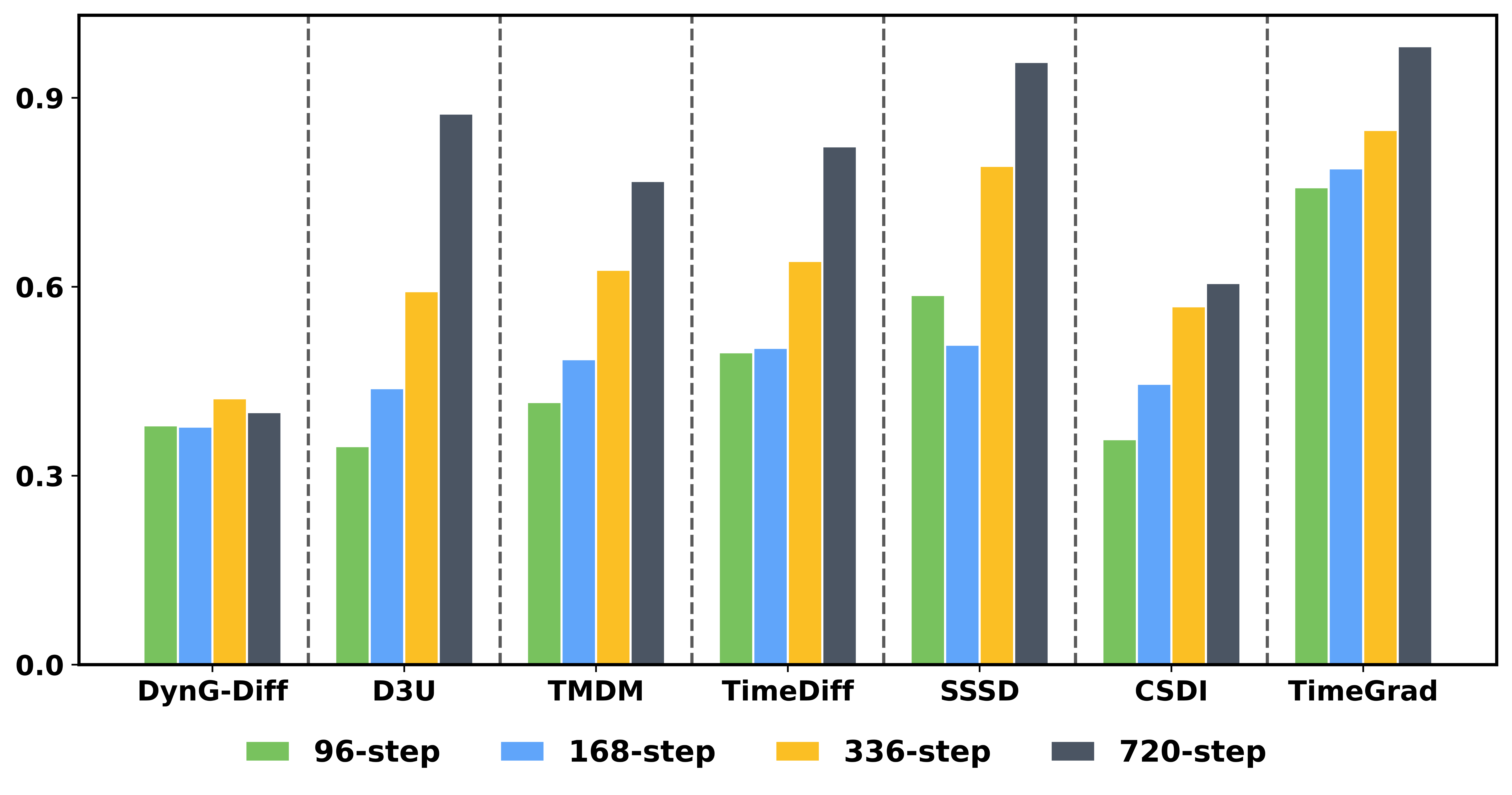}
    \caption{Forecasting results on the Appliance dataset: MSE (left) and CRPS (right).}
    \label{fig:4}
\end{figure}

On Weather, DynG-Diff achieves the lowest average CRPS (0.190), whereas its average MSE (0.502) is higher than those of D3U and TMDM. This divergence indicates a stronger advantage in distributional forecasting than in mean-point estimation. Figure \ref{fig:6} shows localized precision drops across Weather variables and timesteps. These heterogeneous recovery patterns may affect the sample mean more strongly than the predictive distribution, consistent with the MSE--CRPS gap. On Solar, DynG-Diff ranks second across all horizons. The shared diurnal and illumination-driven dynamics produce the vertical block patterns in Figure \ref{sub:d}, favoring methods that explicitly extract regular deterministic trends. This regularity narrows the advantage of variable-specific guidance, while DynG-Diff remains consistently competitive across all horizons.

Conditional diffusion forecasting methods like TMDM tend to entrust the overall data distribution—encompassing trends, seasonality, and extreme noise—to the diffusion model for indiscriminate global fitting. This homogeneous constraint strategy overlooks the significant information heterogeneity across different channels in multivariate sequences. Consequently, the model is highly susceptible to interference from high-noise or anomalous variables during inference generation, which in turn generates adversarial gradients and disrupts the multivariate joint distribution. D3U utilizes a front-end deterministic point forecasting model to extract high-certainty components, relying solely on the diffusion model to fit the high-uncertainty residual distribution. This physical decoupling paradigm enables it to achieve excellent performance in most evaluation scenarios. However, it is noteworthy that when the prediction horizon is extended to 720 steps, D3U's CRPS on the ETTh1 and Appliance datasets deteriorates to 0.471 and 0.875, respectively. In long-term forecasting scenarios, the prediction errors of the upstream deterministic model amplify sharply over time. D3U rigidly attributes all these accumulated cognitive biases to "high-uncertainty residuals" and forces the downstream diffusion model to fit them, ultimately resulting in the unconditional amplification of errors and the breakdown of probabilistic forecasting boundaries.

Overall, the variable-sensitive mechanism benefits most settings by adapting the observation guidance to local precision. However, DynG-Diff does not dominate every Traffic setting. At $L=720$, its MSE is close to that of D3U (0.626 versus 0.610), whereas its CRPS is higher (0.362 versus 0.289). This suggests that the main limitation lies in long-horizon probabilistic calibration rather than point estimation. Traffic contains 862 variables, so the long horizon may make dense reliability estimation more difficult. The absence of explicit sensor-topology modeling may also limit cross-variable calibration in this setting.

\clearpage
\begin{table}[p]
    \centering
    \caption{Comparison of MSE and CRPS on six real-world datasets. The prediction horizon length $L \in \{96, 168, 336, 720\}$. The best and second-best results are highlighted in bold and underlined, respectively.}
    \label{tab:main_results}
    \resizebox{\textwidth}{!}{%
    \begin{tabular}{l l l l l l l l l l l l l l l l}
        \toprule
        \multicolumn{2}{l}{Methods} & \multicolumn{2}{l}{DynG-Diff} & \multicolumn{2}{l}{D3U} & \multicolumn{2}{l}{TMDM} & \multicolumn{2}{l}{TimeDiff} & \multicolumn{2}{l}{SSSD} & \multicolumn{2}{l}{CSDI} & \multicolumn{2}{l}{TimeGrad} \\
        \cmidrule(lr){1-2} \cmidrule(lr){3-4} \cmidrule(lr){5-6} \cmidrule(lr){7-8} \cmidrule(lr){9-10} \cmidrule(lr){11-12} \cmidrule(lr){13-14} \cmidrule(lr){15-16}
        \multicolumn{2}{l}{Metrics} & MSE & CRPS & MSE & CRPS & MSE & CRPS & MSE & CRPS & MSE & CRPS & MSE & CRPS & MSE & CRPS \\
        \midrule
        \multirow{5}{*}{\rotatebox{90}{ETTh1}} 
        & 96  & \textbf{0.281} & \textbf{0.290} & 0.421 & \underline{0.347} & 0.475 & 0.392 & \underline{0.387} & 0.365 & 0.985 & 0.665 & 1.195 & 0.602 & 1.279 & 0.703 \\
        & 168 & \textbf{0.285} & \textbf{0.300} & 0.577 & 0.418 & 0.541 & 0.418 & \underline{0.408} & \underline{0.381} & 0.872 & 0.623 & 1.172 & 0.589 & 1.196 & 0.627 \\
        & 336 & \textbf{0.292} & \textbf{0.317} & 0.632 & 0.466 & 0.601 & 0.482 & \underline{0.454} & \underline{0.409} & 0.935 & 0.626 & 1.201 & 0.637 & 1.175 & 0.589 \\
        & 720 & \textbf{0.389} & \textbf{0.378} & 0.644 & 0.471 & 0.678 & 0.524 & \underline{0.512} & \underline{0.457} & 1.106 & 0.711 & 1.198 & 0.634 & 1.305 & 0.721 \\
        \cmidrule{2-16}
        & Avg & \textbf{0.311} & \textbf{0.321} & 0.568 & 0.425 & 0.573 & 0.454 & \underline{0.440} & \underline{0.403} & 0.974 & 0.656 & 1.191 & 0.615 & 1.238 & 0.660 \\
        \midrule
        \multirow{5}{*}{\rotatebox{90}{Exchange}} 
        & 96  & \textbf{0.120} & \textbf{0.178} & \textbf{0.111} & \underline{0.180} & 0.177 & 0.261 & \underline{0.115} & 0.207 & 0.467 & 0.398 & 0.225 & 0.273 & 1.315 & 0.872 \\
        & 168 & \textbf{0.193} & \textbf{0.231} & \underline{0.198} & \underline{0.257} & 0.254 & 0.315 & 0.211 & 0.292 & 0.465 & 0.397 & 0.484 & 0.447 & 1.304 & 0.876 \\
        & 336 & \textbf{0.352} & \textbf{0.317} & \underline{0.456} & \underline{0.359} & 0.396 & 0.425 & 0.479 & 0.443 & 0.498 & 0.421 & 0.410 & 0.395 & 1.557 & 0.982 \\
        & 720 & \textbf{0.404} & \textbf{0.369} & 0.687 & \underline{0.479} & 0.726 & 0.685 & \underline{0.678} & 0.579 & 0.703 & 0.615 & 0.921 & 0.709 & 1.624 & 1.006 \\
        \cmidrule{2-16}
        & Avg & \textbf{0.267} & \textbf{0.273} & \underline{0.363} & \underline{0.318} & 0.388 & 0.421 & 0.370 & 0.380 & 0.533 & 0.457 & 0.510 & 0.456 & 1.450 & 0.934 \\
        \midrule
        \multirow{5}{*}{\rotatebox{90}{Weather}} 
        & 96  & 0.484 & \textbf{0.178} & \textbf{0.186} & \underline{0.197} & \underline{0.255} & 0.206 & 0.351 & 0.326 & 0.581 & 0.381 & 0.501 & 0.335 & 0.628 & 0.413 \\
        & 168 & 0.522 & \textbf{0.191} & \textbf{0.217} & \underline{0.208} & \underline{0.276} & 0.242 & 0.352 & 0.338 & 0.488 & 0.326 & 0.510 & 0.347 & 0.645 & 0.438 \\
        & 336 & 0.521 & \textbf{0.199} & \textbf{0.259} & \underline{0.227} & \underline{0.324} & 0.280 & 0.359 & 0.323 & 0.461 & 0.313 & 0.493 & 0.318 & 0.497 & 0.368 \\
        & 720 & 0.483 & \textbf{0.195} & 0.438 & 0.359 & \underline{0.347} & 0.298 & 0.371 & 0.384 & 0.625 & 0.433 & \textbf{0.343} & \underline{0.275} & 0.519 & 0.375 \\
        \cmidrule{2-16}
        & Avg & 0.502 & \textbf{0.190} & \textbf{0.275} & \underline{0.247} & \underline{0.300} & 0.256 & 0.358 & 0.342 & 0.538 & 0.363 & 0.461 & 0.318 & 0.572 & 0.398 \\
        \midrule
        \multirow{5}{*}{\rotatebox{90}{Appliance}} 
        & 96  & 0.823 & 0.380 & \underline{0.501} & \textbf{0.347} & 0.587 & 0.417 & 0.688 & 0.496 & 0.964 & 0.587 & \textbf{0.496} & \underline{0.358} & 1.003 & 0.758 \\
        & 168 & 0.763 & \textbf{0.378} & \underline{0.601} & \underline{0.439} & 0.602 & 0.485 & 0.649 & 0.503 & 0.795 & 0.508 & \textbf{0.583} & 0.446 & 1.084 & 0.788 \\
        & 336 & \textbf{0.808} & \textbf{0.423} & 0.880 & 0.593 & 0.963 & 0.627 & 0.908 & 0.641 & 1.286 & 0.792 & \underline{0.854} & \underline{0.569} & 1.207 & 0.849 \\
        & 720 & \textbf{0.691} & \textbf{0.401} & 1.276 & 0.875 & 1.105 & 0.768 & 1.193 & 0.823 & 1.794 & 0.957 & \underline{0.899} & \underline{0.606} & 1.826 & 0.982 \\
        \cmidrule{2-16}
        & Avg & \underline{0.771} & \textbf{0.395} & 0.814 & 0.563 & 0.814 & 0.574 & 0.859 & 0.615 & 1.209 & 0.711 & \textbf{0.708} & \underline{0.494} & 1.280 & 0.844 \\
        \midrule
        \multirow{5}{*}{\rotatebox{90}{Solar}} 
        & 96  & 0.297 & 0.261 & \textbf{0.227} & \textbf{0.235} & \underline{0.268} & 0.296 & 0.798 & 0.587 & 0.397 & 0.368 & 0.358 & 0.329 & 0.589 & 0.408 \\
        & 168 & 0.328 & \underline{0.271} & \textbf{0.273} & \textbf{0.268} & \underline{0.268} & 0.294 & 0.673 & 0.462 & 0.878 & 0.625 & 0.387 & 0.346 & 0.711 & 0.493 \\
        & 336 & 0.457 & \underline{0.324} & \textbf{0.218} & \textbf{0.231} & 0.410 & 0.341 & 0.980 & 0.698 & 0.794 & 0.578 & \underline{0.402} & 0.358 & 0.795 & 0.568 \\
        & 720 & \underline{0.273} & \underline{0.314} & \textbf{0.221} & \textbf{0.234} & 0.462 & 0.358 & 1.195 & 0.802 & 0.801 & 0.579 & 0.399 & 0.354 & 1.002 & 0.759 \\
        \cmidrule{2-16}
        & Avg & \underline{0.338} & \underline{0.292} & \textbf{0.234} & \textbf{0.242} & 0.352 & 0.322 & 0.911 & 0.637 & 0.717 & 0.537 & 0.386 & 0.346 & 0.774 & 0.557 \\
        \midrule
        \multirow{5}{*}{\rotatebox{90}{Traffic}} 
        & 96  & \textbf{0.460} & \underline{0.310} & \underline{0.547} & \textbf{0.241} & 0.603 & 0.445 & 0.784 & 0.537 & 0.809 & 0.471 & 0.879 & 0.518 & 1.008 & 0.561 \\
        & 168 & \textbf{0.280} & \textbf{0.244} & \underline{0.556} & \underline{0.249} & 0.578 & 0.435 & 0.825 & 0.554 & 0.924 & 0.587 & 1.047 & 0.609 & 0.959 & 0.538 \\
        & 336 & \textbf{0.412} & \underline{0.282} & \underline{0.594} & \textbf{0.273} & 0.603 & 0.487 & 0.746 & 0.519 & 0.868 & 0.433 & 1.026 & 0.605 & 1.304 & 0.677 \\
        & 720 & \underline{0.626} & \underline{0.362} & \textbf{0.610} & \textbf{0.289} & 0.833 & 0.572 & 0.983 & 0.677 & 1.072 & 0.609 & 1.240 & 0.674 & 1.103 & 0.582 \\
        \cmidrule{2-16}
        & Avg & \textbf{0.444} & \underline{0.299} & \underline{0.576} & \textbf{0.263} & 0.654 & 0.484 & 0.834 & 0.571 & 0.918 & 0.525 & 1.048 & 0.601 & 1.093 & 0.589 \\
        \bottomrule
    \end{tabular}%
    }
\end{table}
\clearpage

\subsection{Ablation studies}
\label{subsec:5.3}

To investigate the specific contributions of the core components in the DynG-Diff framework, we quantitatively analyzed the model's forecasting performance under different weight modes to verify the superiority of dynamic weight guidance (Dynamic) over homogeneous scalar guidance (Scalar) and no guidance (None). Specifically, by setting the detached importance weight $\bar{a}_{l,d}^t$ in Eq. \eqref{eq:14} to a constant 1, the dynamic weight guidance degenerates into homogeneous scalar guidance; by setting the global guidance scale s to a constant 0, the dynamic weight guidance degenerates into no guidance. Table \ref{tab:ablation_weight} presents the metrics of these three modes across six benchmark datasets under different prediction time scales.

When the guidance mechanism is entirely omitted (NONE), the model's forecasting performance undergoes severe degradation, with its average MSE and CRPS worsening by 125.6\% and 81.7\%, respectively. This significant performance decay indicates that for an unconditionally trained diffusion backbone network, introducing observation guidance during the inference stage is a prerequisite for achieving accurate time series forecasting. Furthermore, when homogeneous guidance is applied, the model's average MSE and CRPS deteriorate by 26.1\% and 14.8\%, respectively. This further demonstrates that different variables in multivariate time series possess distinctly different degrees of importance and information contributions. The "variable-sensitive" dynamic guidance effectively resolves the conflict of variable heterogeneity by intelligently allocating local precision weights to different variables, thereby avoiding the disruption of the overall multivariate joint distribution caused by applying indiscriminate homogeneous guidance.

Figure \ref{fig:5} illustrates the probabilistic forecasting intervals generated by the model on a single channel of the ETTh1 dataset under different guidance modes. It can be observed that the forecasting intervals generated without guidance (NONE) are extremely diffuse, and their median predictions almost completely fail to capture future fluctuation trends. Although homogeneous guidance (SCALAR) can correct the prediction trajectory to some extent, the generated intervals are overly conservative, appearing as a result of "compromise." In contrast, dynamic guidance can apply fine-grained interventions based on the real-time evolutionary state of the system, and its generated probabilistic intervals appear much tighter while ensuring the coverage rate of the ground truth. More probabilistic forecasting results can be found in Appendix \ref{subapp:C.1}.

\begin{figure}[!t]
    \centering
    \begin{subfigure}[b]{0.32\textwidth}
        \centering
        \includegraphics[width=\linewidth]{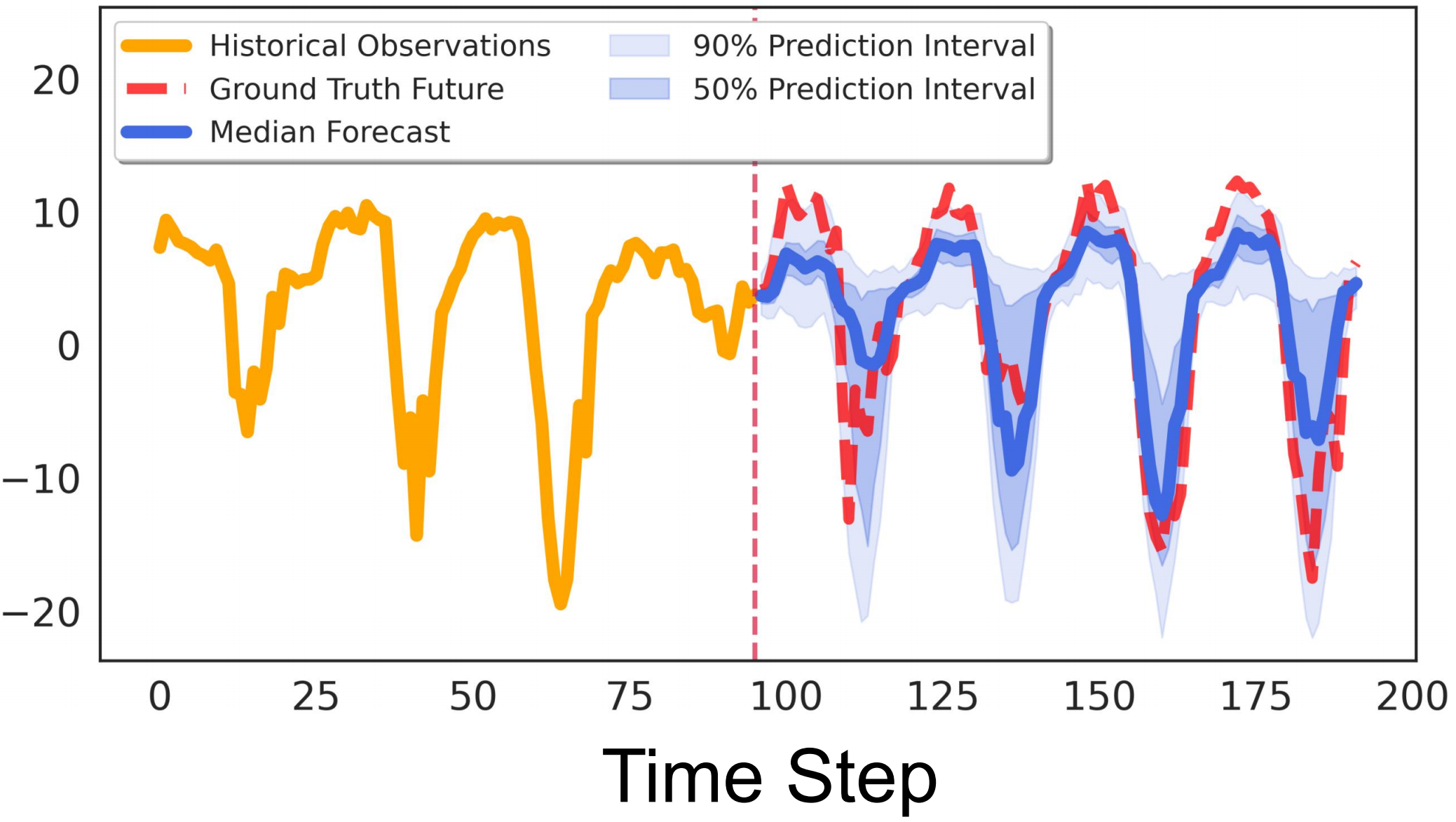}
        \caption{Dynamic Guidance}
    \end{subfigure}
    \hfill
    \begin{subfigure}[b]{0.32\textwidth}
        \centering
        \includegraphics[width=\linewidth]{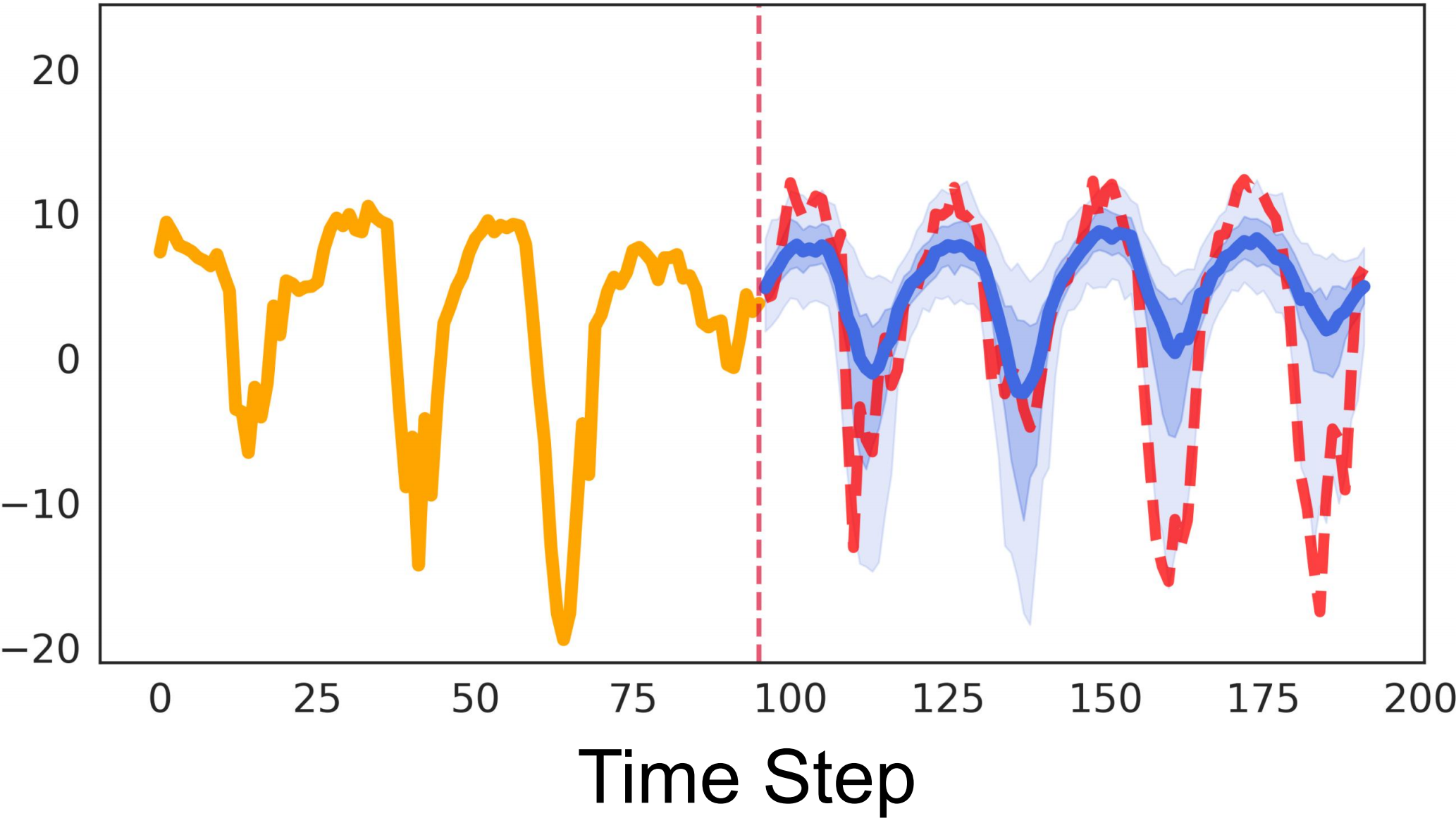}
        \caption{Scalar Guidance}
    \end{subfigure}
    \hfill
    \begin{subfigure}[b]{0.32\textwidth}
        \centering
        \includegraphics[width=\linewidth]{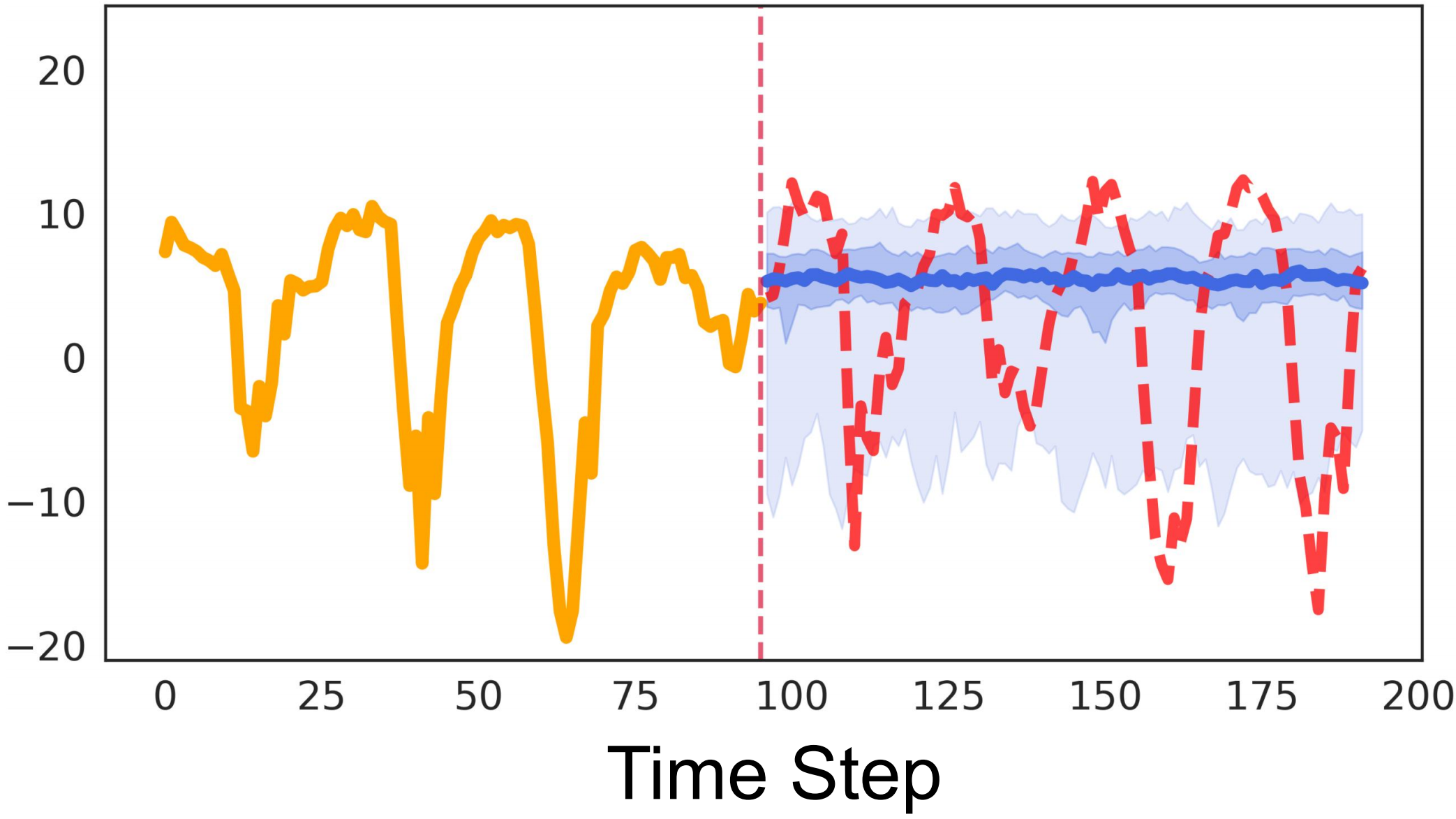}
        \caption{None Guidance}
    \end{subfigure}
    
    \caption{Comparison of probabilistic forecasting intervals generated by the model on the ETTh1 dataset under different weight modes.}
    \label{fig:5}
\end{figure}

\begin{table}[!t]
    \centering
    \caption{Comparison of MSE and CRPS results under different weight modes.}
    \label{tab:ablation_weight}
    \small
    \setlength{\tabcolsep}{4.5pt}
    \begin{tabular}{l l l l l l l l}
        \toprule
        \multicolumn{2}{l}{Weight Mode} & \multicolumn{2}{l}{Dynamic} & \multicolumn{2}{l}{Scalar} & \multicolumn{2}{l}{None} \\
        \cmidrule(lr){1-2} \cmidrule(lr){3-4} \cmidrule(lr){5-6} \cmidrule(lr){7-8}
        \multicolumn{2}{l}{Metrics} & MSE & CRPS & MSE & CRPS & MSE & CRPS \\
        \midrule
        \multirow{4}{*}{\rotatebox{90}{ETTh1}} 
        & 96  & \textbf{0.281} & \textbf{0.290} & \underline{0.315} & \underline{0.307} & 0.647 & 0.468 \\
        & 168 & \textbf{0.285} & \textbf{0.300} & \underline{0.311} & \underline{0.325} & 0.641 & 0.474 \\
        & 336 & \textbf{0.292} & \textbf{0.317} & \underline{0.294} & \underline{0.324} & 0.684 & 0.486 \\
        & 720 & \textbf{0.389} & \textbf{0.378} & \underline{0.407} & \underline{0.400} & 0.772 & 0.532 \\
        \midrule
        \multirow{4}{*}{\rotatebox{90}{Exchange}} 
        & 96  & \underline{0.121} & \underline{0.179} & \textbf{0.120} & \textbf{0.178} & 0.162 & 0.208 \\
        & 168 & \textbf{0.193} & \textbf{0.231} & \underline{0.200} & \underline{0.234} & 0.249 & 0.269 \\
        & 336 & \textbf{0.351} & \underline{0.319} & \underline{0.352} & \textbf{0.317} & 0.497 & 0.414 \\
        & 720 & \underline{0.404} & \textbf{0.369} & \textbf{0.403} & \underline{0.371} & 1.278 & 0.846 \\
        \midrule
        \multirow{4}{*}{\rotatebox{90}{Weather}} 
        & 96  & \textbf{0.484} & \textbf{0.178} & \underline{0.818} & \underline{0.275} & 1.021 & 0.394 \\
        & 168 & \textbf{0.522} & \textbf{0.191} & 1.683 & 0.388 & \underline{0.989} & \underline{0.385} \\
        & 336 & \textbf{0.521} & \textbf{0.199} & \underline{0.704} & \underline{0.246} & 0.988 & 0.394 \\
        & 720 & \textbf{0.483} & \textbf{0.195} & \underline{0.587} & \underline{0.248} & 1.170 & 0.470 \\
        \midrule
        \multirow{4}{*}{\rotatebox{90}{Appliance}} 
        & 96  & \underline{0.823} & \textbf{0.380} & \textbf{0.804} & \underline{0.403} & 0.982 & 0.722 \\
        & 168 & \underline{0.803} & \underline{0.387} & \textbf{0.763} & \textbf{0.378} & 1.001 & 0.487 \\
        & 336 & \textbf{0.808} & \textbf{0.423} & \underline{0.824} & \underline{0.477} & 1.025 & 0.534 \\
        & 720 & \underline{0.742} & \textbf{0.406} & \textbf{0.691} & \underline{0.406} & 1.108 & 0.632 \\
        \midrule
        \multirow{4}{*}{\rotatebox{90}{Solar}} 
        & 96  & \textbf{0.297} & \textbf{0.261} & \underline{0.378} & \underline{0.303} & 1.497 & 0.690 \\
        & 168 & \textbf{0.328} & \textbf{0.271} & \underline{0.660} & \underline{0.376} & 1.496 & 0.691 \\
        & 336 & \underline{0.457} & \textbf{0.324} & \textbf{0.456} & \underline{0.355} & 1.497 & 0.693 \\
        & 720 & \textbf{0.273} & \textbf{0.314} & \underline{0.389} & \underline{0.393} & 1.471 & 0.689 \\
        \midrule
        \multirow{4}{*}{\rotatebox{90}{Traffic}} 
        & 96  & \textbf{0.460} & \textbf{0.310} & \underline{0.596} & \underline{0.389} & 1.281 & 0.639 \\
        & 168 & \textbf{0.295} & \textbf{0.247} & \underline{0.310} & \underline{0.274} & 1.148 & 0.590 \\
        & 336 & \textbf{0.412} & \textbf{0.282} & \underline{0.444} & \underline{0.298} & 1.138 & 0.581 \\
        & 720 & \underline{0.651} & \textbf{0.371} & \textbf{0.646} & \underline{0.382} & 1.323 & 0.647 \\
        \midrule
        \multicolumn{2}{c}{\textbf{Overall Avg}} & \textbf{0.444} & \textbf{0.296} & \underline{0.560} & \underline{0.340} & 1.002 & 0.538 \\
        \bottomrule
    \end{tabular}
\end{table}

\subsection{Model analysis}
\label{subsec:5.4}

\subsubsection{Spatio-Temporal Consistency and Heterogeneity Perception of Dynamic Weights}
\label{subsubsec:5.4.1}

As illustrated in Figure \ref{fig:6}, we extracted the true single-step observation precision of the Weather dataset at key timesteps during the diffusion denoising process and compared it with the real-time dynamic guidance weights generated by the network through heatmap visualization. It can be observed that the weight matrix generated by the network (bottom half) and the true prediction precision (top half) exhibit a high degree of semantic consistency in their spatio-temporal distributions. High-precision regions (bright yellow) in the heatmaps often precisely correspond to strong guidance weights, whereas low-precision or high-uncertainty regions (dark black) are assigned extremely weak guidance intensities. The dynamic weight network is also able to acutely perceive and adaptively respond to local anomalous features, such as the sudden drop in precision in certain channels present in the Weather dataset. This result demonstrates that the policy network can not only distinguish the importance differences among various variables in the spatial dimension but also discern the reliability evolution of the denoising state as the timesteps progress. This aligns well with our initial design objective: applying strong gradient guidance to high-confidence variables to stabilize the recovery path of the overall system, while simultaneously applying weak guidance to low-confidence (unreliable) variables to effectively suppress their negative interference with the denoising process. More visualization results can be found in Appendix \ref{subapp:C.2}.

The localized precision drops in Figure \ref{fig:6} show that reliability varies across both channels and time, providing a qualitative basis for analyzing the response of dynamic guidance to channel corruption.

\begin{figure}[!t]
	\centering
	\includegraphics[width=.88\linewidth]{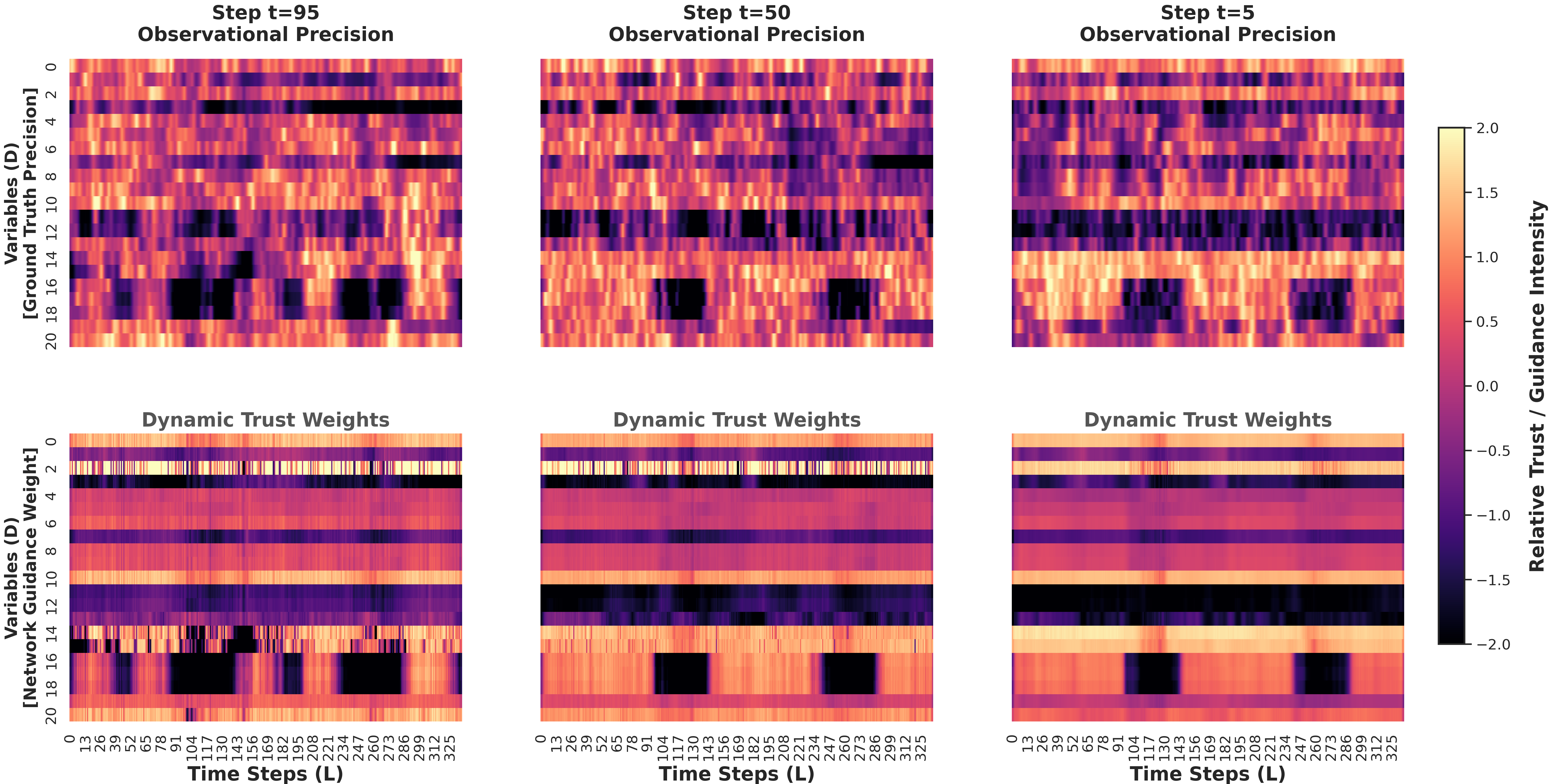}
	\caption{Heatmap comparison between the true observation precision and the dynamic guidance weights generated by the network at key timesteps of the diffusion process in the Weather dataset.}
	\label{fig:6}
\end{figure}

\subsubsection{Robustness Evaluation under Extreme Noise Scenarios}
\label{subsubsec:5.4.2}

Based on the conclusion that "the dynamic network can effectively suppress interference from unreliable variables," we further conducted stress tests to evaluate the model's stability when facing sensor damage or extreme high-frequency noise. We injected independent standard Gaussian noise into partially specified channels of the ETTh1 and Weather datasets, and comprehensively compared the performance degradation rates of dynamic weight guidance and homogeneous scalar guidance on the MSE and CRPS metrics. The results are shown in Table \ref{tab:noise_robustness}, Figure \ref{fig:7}.

Under injected noise, homogeneous guidance degrades sharply because it constrains clean and corrupted variables equally. Dynamic guidance instead reduces the influence of unreliable channels and limits the propagation of misleading gradients. On Weather, the negative MSE degradation should not be interpreted as a benefit of noise. Reweighting corrupted channels can incidentally shift the finite-sample mean closer to the ground truth, whereas the higher CRPS shows that distributional quality still deteriorates. This behavior is consistent with the reliability-suppression pattern in Figure \ref{fig:6}, although the current evidence does not establish a channel-level causal relationship.

\begin{table}[!t]
    \centering
    \caption{Comparison of MSE and CRPS degradation between dynamic weight guidance and homogeneous weight guidance models after injecting Gaussian noise into the ETTh1 and Weather datasets.}
    \label{tab:noise_robustness}
    \resizebox{\textwidth}{!}{%
    \begin{tabular}{l l c c c c c c c c}
        \toprule
        \multicolumn{2}{l}{Destruction Mode} & \multicolumn{4}{c}{Dynamic} & \multicolumn{4}{c}{Scalar} \\
        \cmidrule(lr){1-2}\cmidrule(lr){3-6} \cmidrule(lr){7-10}
        \multicolumn{2}{l}{Metrics} & MSE & Degradation & CRPS & Degradation & MSE & Degradation & CRPS & Degradation \\
        \midrule
        \multirow{2}{*}{\begin{tabular}[c]{@{}l@{}}ETTh1\\(Noise Ch5)\end{tabular}} 
        & 96  & 0.302 & 7.47\% & 0.315 & 8.62\% & 0.345 & 7.81\% & 0.340 & 9.67\% \\
        & 168 & 0.319 & 11.92\% & 0.341 & 13.66\% & 0.384 & 24.67\% & 0.389 & 20.06\% \\
        \midrule
        \multirow{2}{*}{\begin{tabular}[c]{@{}l@{}}Weather\\(Noise Ch10, 20)\end{tabular}} 
        & 96  & 0.401 & -17.14\% & 0.207 & 16.29\% & 0.888 & 8.55\% & 0.277 & 23.11\% \\
        & 168 & 0.439 & -15.90\% & 0.229 & 19.89\% & 1.927 & 14.49\% & 0.450 & 25.69\% \\
        \bottomrule
    \end{tabular}
    }
\end{table}

\begin{table}[!t]
    \centering
    \caption{Comparison of parameters and computational efficiency between the unconditional backbone network and the state-aware policy network on three datasets.}
    \label{tab:efficiency}
    \resizebox{\textwidth}{!}{%
    \begin{tabular}{l ccc ccc ccc}
        \toprule
        Dataset & \multicolumn{3}{c}{ETTh1} & \multicolumn{3}{c}{Weather} & \multicolumn{3}{c}{Traffic} \\
        \cmidrule(lr){1-1} \cmidrule(lr){2-4} \cmidrule(lr){5-7} \cmidrule(lr){8-10}
        Model Component & 
        \begin{tabular}[c]{@{}c@{}}Params\\(MB)\end{tabular} & \begin{tabular}[c]{@{}c@{}}Train\\(min)\end{tabular} & \begin{tabular}[c]{@{}c@{}}Infer\\(min)\end{tabular} & 
        \begin{tabular}[c]{@{}c@{}}Params\\(MB)\end{tabular} & \begin{tabular}[c]{@{}c@{}}Train\\(min)\end{tabular} & \begin{tabular}[c]{@{}c@{}}Infer\\(min)\end{tabular} & 
        \begin{tabular}[c]{@{}c@{}}Params\\(MB)\end{tabular} & \begin{tabular}[c]{@{}c@{}}Train\\(min)\end{tabular} & \begin{tabular}[c]{@{}c@{}}Infer\\(min)\end{tabular} \\
        \midrule
        Backbone       & 0.19 & 13.62 & 3.06 & 0.95 & 21.62 & 30.72 & 8.60 & 14.65 & 28.62 \\
        Policy Network & 0.02 & 1.08  & /    & 0.03 & 1.22  & /     & 0.41 & 1.50  & /     \\
        DynG-Diff      & 0.21 & 14.70 & 4.18 & 0.98 & 22.84 & 39.74 & 9.01 & 16.15 & 37.31 \\
        \bottomrule
    \end{tabular}
    }
\end{table}

\begin{figure}[!t]
	\centering
	\includegraphics[width=0.78\linewidth]{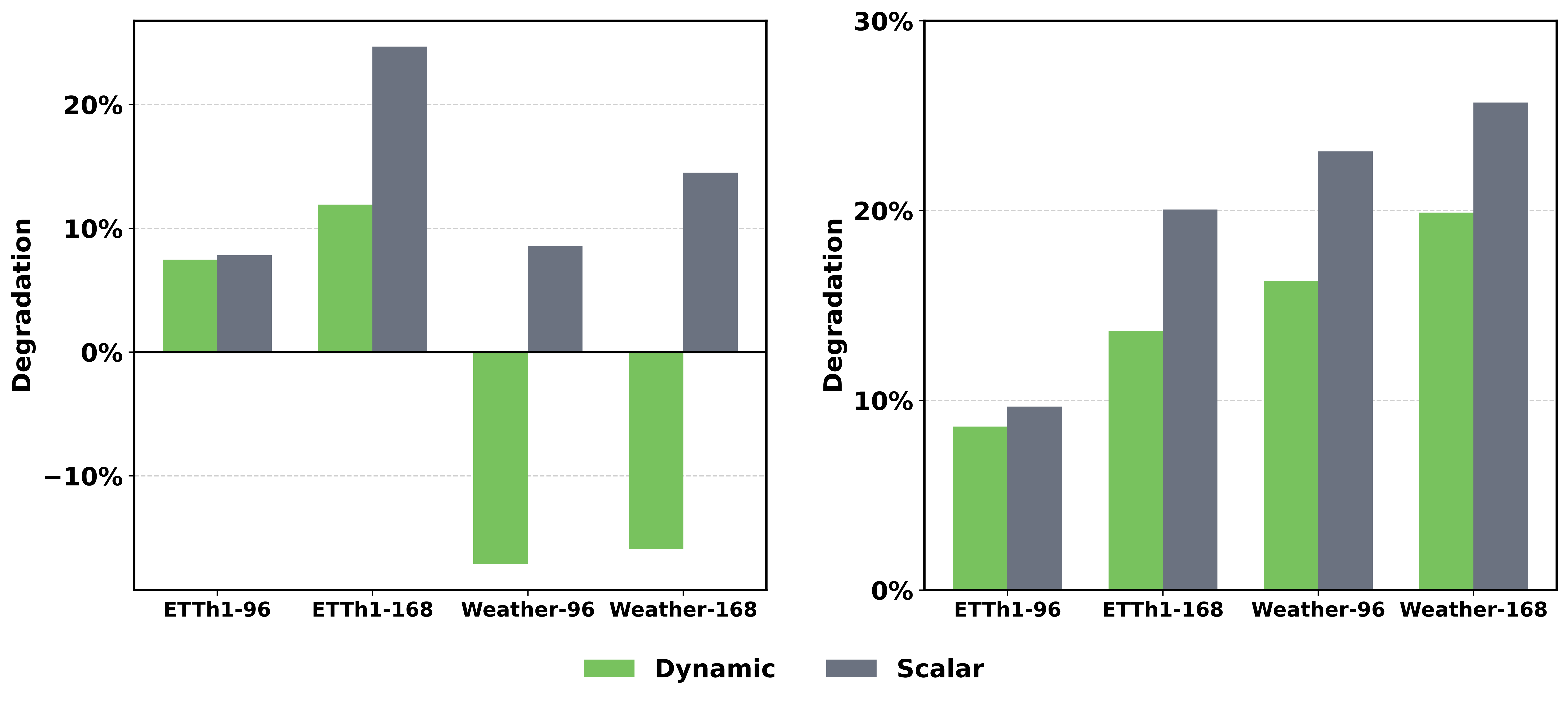}
	\caption{Comparison of degradation rates under extreme noise for different guidance mechanisms: MSE (left) and CRPS (right).}
	\label{fig:7}
\end{figure}

\subsubsection{Computation efficiency analysis}
\label{subsubsec:5.4.3}

We quantitatively analyzed the computational overhead of DynG-Diff on ETTh1, Weather, and Traffic. Table \ref{tab:efficiency} and Figure \ref{fig:8} compare parameter counts, training time, and inference time over the full datasets at $L=96$. The lightweight policy network accounts for approximately 3\% to 10\% of the backbone parameters.

The policy network adds 1.5 minutes of training on Traffic. During inference, however, dynamic guidance increases the total time from 28.62 to 37.31 minutes, an overhead of approximately 30.4\%. This cost is non-negligible for latency-sensitive applications.

The overhead mainly arises from computing the weighted observation-likelihood gradient at each reverse step. The current implementation is therefore better suited to offline or batch forecasting. Future work will explore selective guidance steps and faster diffusion solvers.

\begin{figure}[!t]
	\centering
	\includegraphics[width=0.78\linewidth]{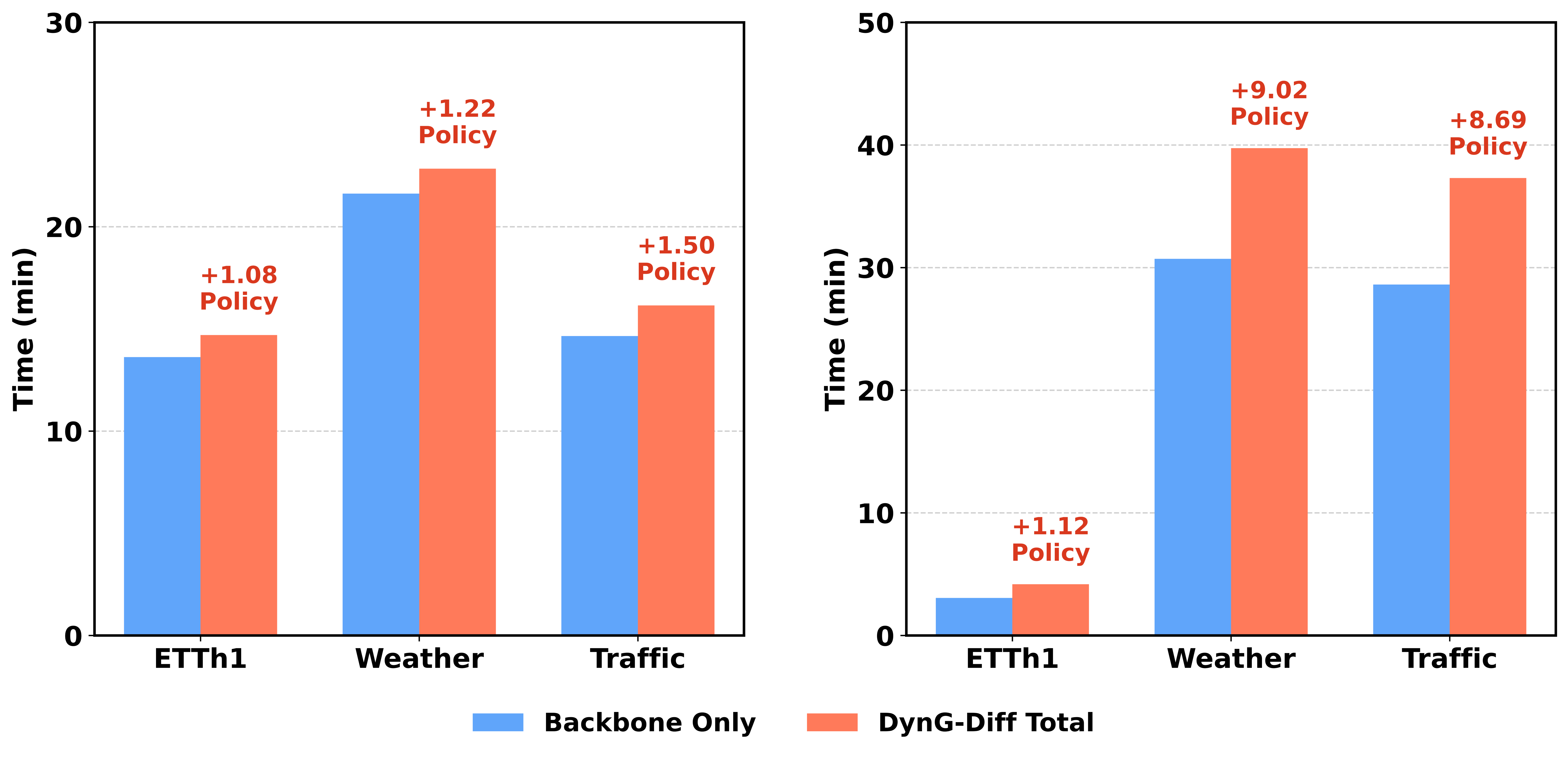}
	\caption{Comparison of computational overhead (in minutes) across different datasets: Train (left) and Inference (right).}
	\label{fig:8}
\end{figure}

\section{Conclusion}
\label{sec:Conclusion}

In this paper, we proposed DynG-Diff, a variable-sensitive dynamic guidance diffusion framework designed to address the challenge of information heterogeneity in probabilistic multivariate time series forecasting. The framework separates unconditional joint-distribution learning from variable-sensitive observation guidance during inference. Its lightweight policy network adaptively infers the reliability of each variable from real-time noisy states and one-step denoising estimates. By formulating the dynamic guidance matrix as the local precision of an Asymmetric Laplace Distribution, DynG-Diff applies stronger guidance to high-confidence variables while suppressing interference from anomalous noise. Experiments on multiple real-world datasets demonstrate competitive performance in most scenarios, while stress tests validate its robustness under severe noise.

Building on the probabilistic forecasting capability demonstrated in this study, future work will investigate extensions to missing-value imputation and anomaly detection through task-appropriate guidance mechanisms. Additional directions include accelerating reverse diffusion sampling and integrating topological structures to model spatial dependencies among heterogeneous variables. Overall, this work provides a basis for studying reusable unconditional generative backbones in multivariate time-series analysis.

\appendix
\numberwithin{equation}{section}
\numberwithin{figure}{section}
\numberwithin{table}{section}

\section{Supplementary Derivations of Denoising Diffusion Probabilistic Models}
\label{app:A}

In Section \ref{subsec:3.2}, we briefly introduced the core objective function of Denoising Diffusion Probabilistic Models (DDPMs). In this section, we provide supplementary explanations of the core theoretical formulations in both the forward and reverse processes, which serve as the foundation for constructing the dynamic guidance mechanism.

\textbf{Forward Process and Arbitrary-Step Sampling.} During the forward noise-addition process, the model progressively injects Gaussian noise into the initial data $x^{0}$ through a fixed Markov chain. By leveraging the reparameterization trick and the additivity of independent Gaussian distributions, the latent state $x^{t}$ at an arbitrary time step $t$ can be directly sampled from the initial state $x^{0}$ via a single-step computation:
\begin{equation}
x^{t} = \sqrt{\bar{\alpha}_{t}}x^{0} + \sqrt{1 - \bar{\alpha}_{t}}\epsilon
\label{eq:20}
\end{equation}
where $\bar{\alpha}_{t} = \prod_{i=1}^{t} \alpha_{i}$ and the merged noise is $\epsilon \sim \mathcal{N}(0, I)$. From this formulation, we can inversely derive the model's real-time, single-step estimation of the original data $\hat{x}^{0}$, which constitutes the core premise for DynG-Diff to construct the state-aware policy network and calculate the observation guidance loss.

\textbf{Reverse Process and Optimization Objective.} The reverse denoising process aims to progressively reconstruct the true data distribution starting from standard Gaussian noise $x^T \sim \mathcal{N}(0, I)$ by learning the transition distributions. Ho et al. \cite{ho_denoising_2020} demonstrated that by parameterizing and simplifying the Evidence Lower Bound (ELBO), minimizing the KL divergence between the reverse transition distribution and the true posterior distribution is ultimately equivalent to optimizing the simplified mean squared error loss $\mathcal{L}_{simple}$ as shown in Eq. \eqref{eq:2}. During the inference stage, based on the noise $\epsilon_{\theta}(x^t, t)$ predicted by the unconditional diffusion backbone, the single-step reverse denoising sampling process can be formulated as:
\begin{equation}
x^{t-1} = \frac{1}{\sqrt{\alpha_t}} \left( x^t - \frac{1 - \alpha_t}{\sqrt{1 - \bar{\alpha}_t}} \epsilon_{\theta}(x^t, t) \right) + \sigma_t z \quad
\label{eq:21}
\end{equation}
where $z \sim \mathcal{N}(0, I)$ denotes the introduced random noise term. During the guided inference phase of DynG-Diff, we utilize the dynamic weights outputted by the policy network to compute the observation likelihood-based guidance gradient $\nabla_{x^t}\mathcal{L}_{guide}$, which is then strictly superimposed onto the aforementioned sampled noise prediction term $\epsilon_{\theta}(x^t, t)$ according to Bayes' theorem, thereby achieving variable-sensitive conditional generation without the need for retraining.

\section{Technical Details}
\label{app:B}

\subsection{Unconditional Backbone Network Architecture and Training Details}
\label{subapp:B.1}

In the DynG-Diff framework, the core role of the unconditional backbone network $\epsilon_\theta$ is to learn the underlying joint distribution of multivariate time series, providing a solid generative prior for dynamic guidance during the inference stage. To balance computational efficiency for long sequence modeling and the ability to capture complex dynamics, the network departs from the traditional Transformer architecture and adopts a residual network design based on Structured State Space models (S4) \cite{gu_efficiently_2021}.

The input to the network at any arbitrary diffusion timestep consists of two parts: one is the current noisy intermediate state $x^t \in \mathbb{R}^{L \times D}$, where $L$ is the sequence length and $D$ is the variable dimension; the other is the current discrete diffusion timestep $t$. The output of the network is the prediction of the Gaussian noise added at that timestep, $\hat{\epsilon} = \epsilon_\theta(x^t, t)$, whose dimension is consistent with the input, i.e., $\mathbb{R}^{L \times D}$.

Because DynG-Diff adopts a decoupled training paradigm, the backbone and policy networks are optimized independently. During pre-training, the backbone remains unconditional and does not use historical observations $y_{obs}$ or forecasting labels. Its objective is the simplified mean squared error loss in Eq. \eqref{eq:2}, through which it approximates the joint distribution represented by the training data. The trained backbone then provides one-step estimates at arbitrary diffusion timesteps, which are combined with observation likelihoods during guided inference.

The internal architecture of the unconditional backbone network primarily consists of three core modules: feature and time mapping, deep temporal feature extraction, and output aggregation, as illustrated in Figure \ref{FIG:9}. After the input $x^{t}$ and $t$ undergo feature and time mapping, $N$ S4 residual blocks extract long-term dependencies. The skip features from each block are globally aggregated and fused with residuals to output the predicted noise. Among them, the S4 layer, acting as the core operator, can efficiently capture long-range physical dependencies in multivariate sequences with near-linear complexity, and is combined with a gating mechanism to filter out redundant noise.

\begin{figure}[!t]
	\centering
	\includegraphics[width=.88\linewidth]{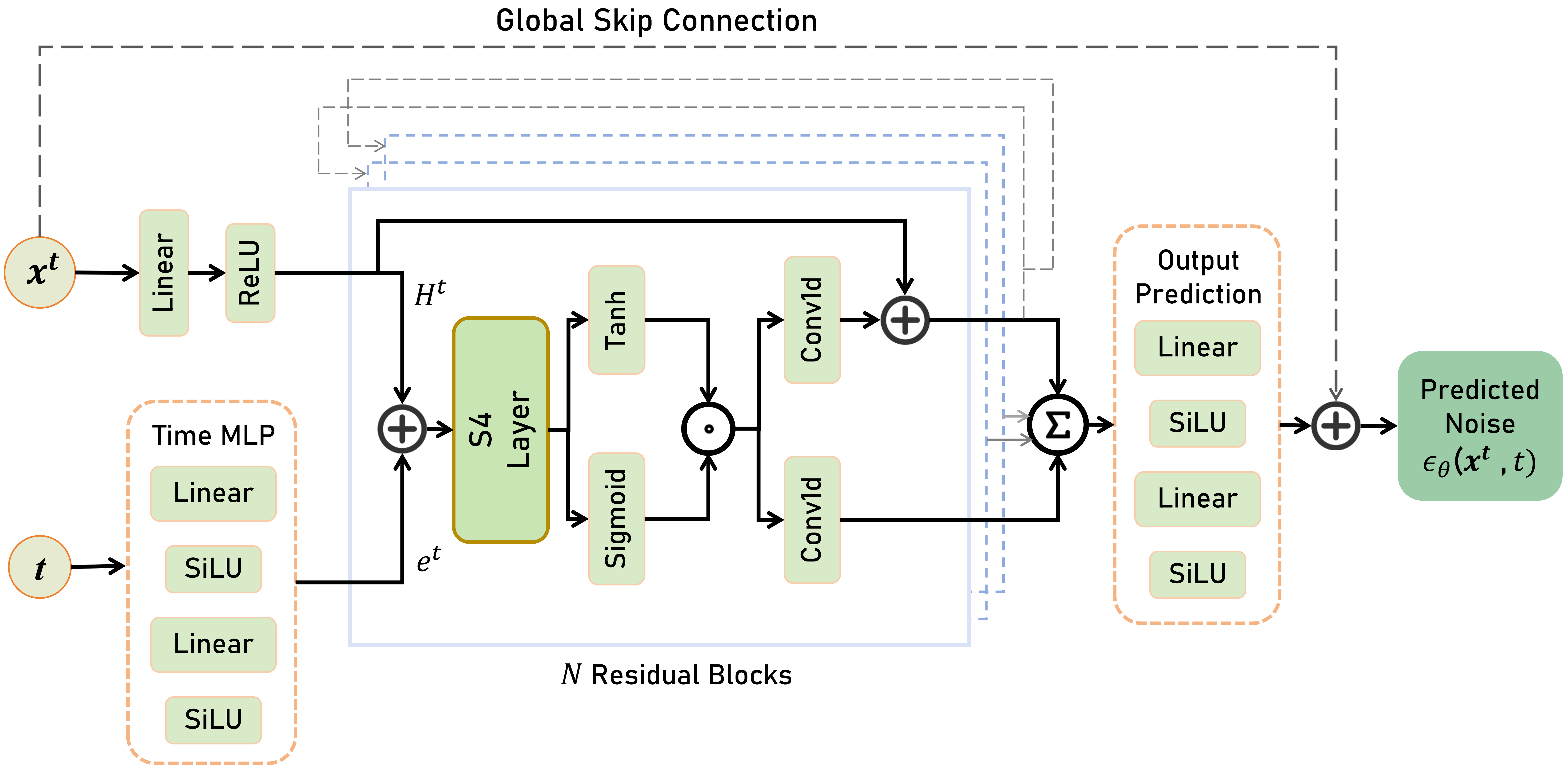}
	\caption{Schematic of the unconditional backbone network architecture and data flow. The network receives the noisy state $x^t$ and timestep $t$ as inputs. The timestep $t$ is encoded into global temporal features $e^t$ via a Time MLP and injected layer by layer into $N$ S4 residual blocks; the noisy state $x^t$ enters the S4 core layer after spatial mapping to extract long-term sequential dependencies. The skip features generated by each residual block are aggregated at a global summation node (Global Skip Sum). After being mapped by the Output Prediction Head, they undergo global residual fusion (Global Residual Add) with the initial state from the input end, ultimately outputting the predicted noise $\hat{\epsilon} = \epsilon_\theta(x^t, t)$.}
	\label{FIG:9}
\end{figure}

\subsection{Asymmetric Laplace Distribution Modeling for Conditional Observation Distribution}
\label{subapp:B.2}

In Section \ref{subsec:4.2}, to accurately delineate the predictive uncertainty in probabilistic forecasting tasks, we adopted the Asymmetric Laplace Distribution (ALD) to model the conditional observation distribution $p(y_{obs}|x^t)$ given the latent variable $x^t$. This section provides the detailed theoretical derivations for this modeling.

For a random variable $Y$ following an Asymmetric Laplace Distribution with a location parameter $\mu$, a scale parameter $b > 0$, and an asymmetry parameter $\kappa \in (0,1)$, its standard probability density function is defined as:
\begin{equation}
f(y; \mu, b, \kappa) = \frac{\kappa(1 - \kappa)}{b}\exp\left(-\frac{\rho_\kappa(y - \mu)}{b}\right) \quad
\label{eq:22}
\end{equation}
where $\rho_\kappa(e) = \max(\kappa \cdot e, (\kappa - 1) \cdot e)$ is the asymmetric quantile loss. For $(l,d)$, $y$ corresponds to $y_{l,d}$, $\mu$ to $\hat{x}_{l,d}^0$, and $1/b$ to the policy output $a_{l,d}^t$. During guidance, we use its detached value $\bar{a}_{l,d}^t=\operatorname{sg}(a_{l,d}^t)$ as the plug-in local precision. Substitution gives:
\begin{equation}
p(y_{l,d}|x^t; \kappa, \bar{a}_{l,d}^t) = \kappa(1 - \kappa)\bar{a}_{l,d}^t \exp\left(- \bar{a}_{l,d}^t \cdot \rho_\kappa(y_{l,d} - \hat{x}_{l,d}^0)\right) \quad
\label{eq:23}
\end{equation}
Because $\bar{a}_{l,d}^t$ is a stop-gradient quantity, $\kappa(1-\kappa)\bar{a}_{l,d}^t$ is constant with respect to $x^t$ during the current local guidance update. It can therefore be omitted when taking this gradient, yielding Eq. \eqref{eq:11}.

\section{Experiments}
\label{app:C}

\subsection{Additional Comparisons of Generated Probabilistic Forecasting Intervals}
\label{subapp:C.1}

In Section \ref{subsec:5.3}, we illustrated the probabilistic forecasting intervals generated by the model on the ETTh1 dataset under different guidance modes. In Figure \ref{fig:10}, we additionally present comparisons of the probabilistic forecasting intervals generated by the model on five datasets: Exchange, Weather, Appliance, Solar, and Traffic. We make targeted adjustments to the historical observation window and the prediction window based on the sampling frequencies of the datasets. For datasets with a "10min" sampling interval (Weather, Appliance, Solar), we set $H=L=168$; for datasets with an hourly or daily sampling interval (Traffic and Exchange, respectively), we set $H=L=96$. Furthermore, across all datasets, we select variable channels exhibiting distinct fluctuation amplitudes and evolutionary patterns for visualization.

From the comparisons across the subfigures, it is clearly observable that the dynamic guidance mechanism consistently generates more compact probabilistic intervals that effectively cover the true observations. In contrast, the prediction intervals under the no-guidance mode exhibit significant dispersion, while homogeneous scalar guidance is often constrained by global trade-offs, yielding overly conservative interval boundaries. This once again corroborates the exceptional performance of the variable-sensitive dynamic intervention mechanism in enhancing the prediction quality of multivariate joint distributions.

\begin{figure}[p]
    \centering
    \begin{subfigure}[b]{1\textwidth}
        \centering
        \includegraphics[width=\linewidth]{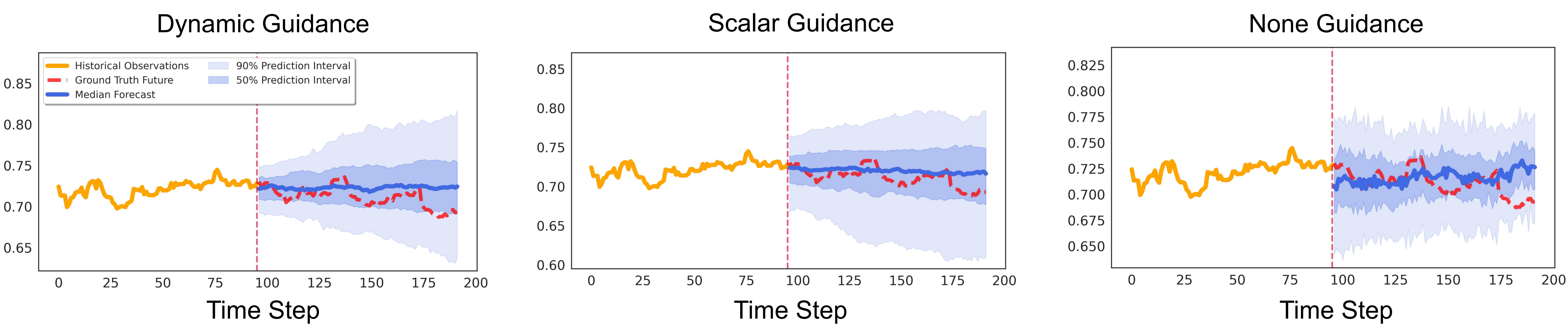}
        \caption{Exchange 7th dimension}
    \end{subfigure}
    
    \vspace{2em}
    
    \begin{subfigure}[b]{1\textwidth}
        \centering
        \includegraphics[width=\linewidth]{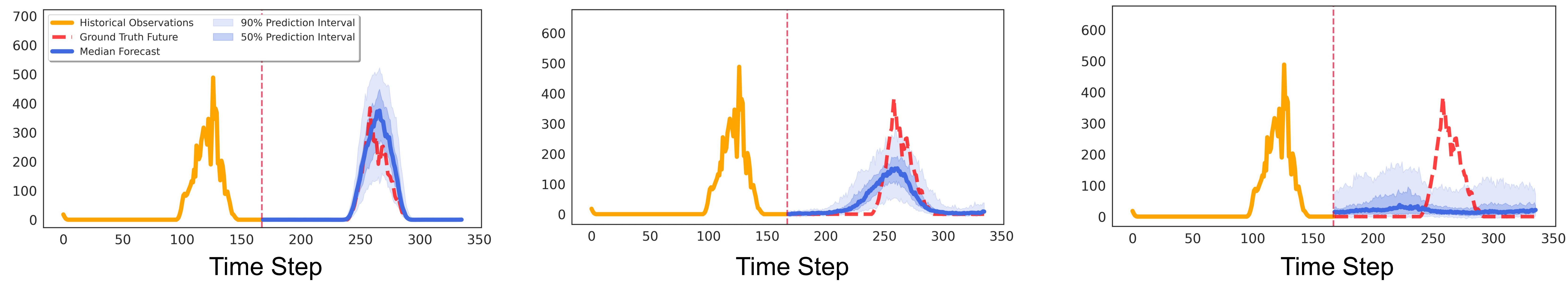}
        \caption{Weather 17th dimension}
    \end{subfigure}
    
    \vspace{2em}
    
    \begin{subfigure}[b]{1\textwidth}
        \centering
        \includegraphics[width=\linewidth]{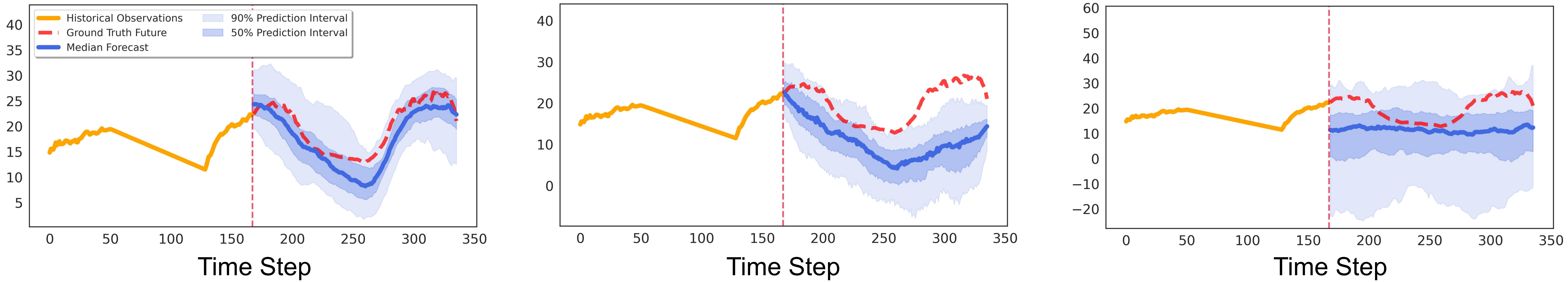}
        \caption{Appliance 12th dimension}
    \end{subfigure}
    
    \vspace{2em}
    
    \begin{subfigure}[b]{1\textwidth}
        \centering
        \includegraphics[width=\linewidth]{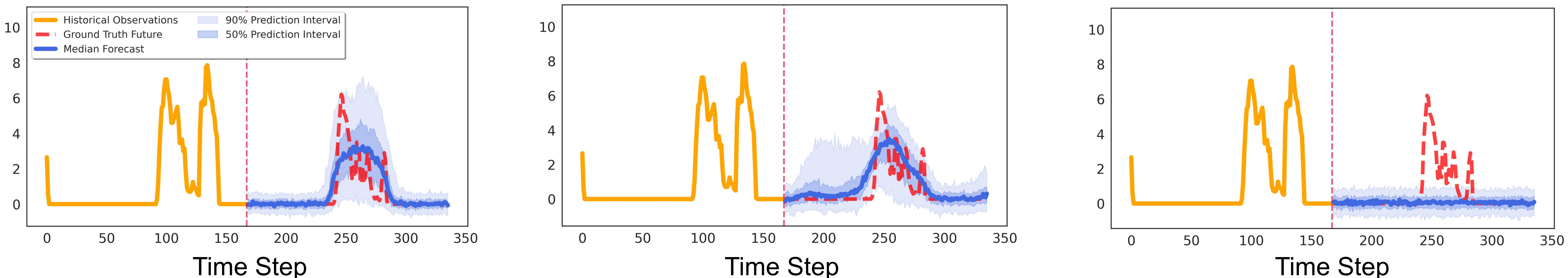}
        \caption{Solar 6th dimension}
    \end{subfigure}

    \vspace{2em}
    
    \begin{subfigure}[b]{1\textwidth}
        \centering
        \includegraphics[width=\linewidth]{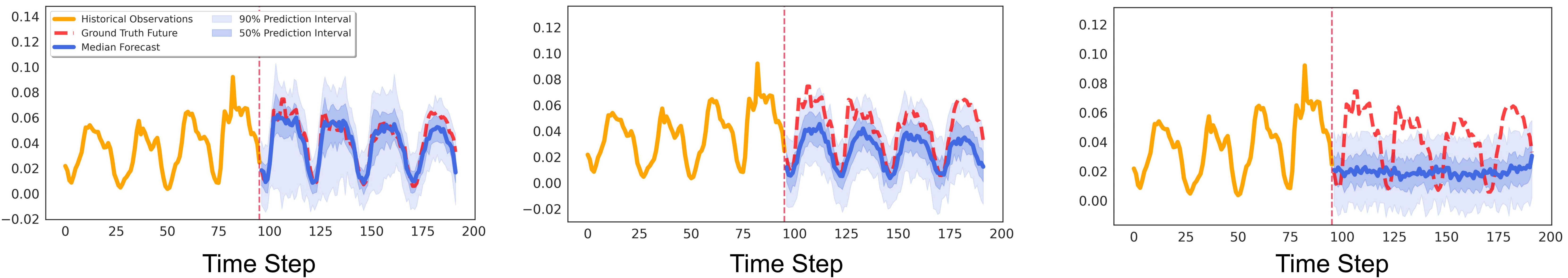}
        \caption{Traffic 3rd dimension}
    \end{subfigure}
    
    \caption{Comparison of probabilistic forecasting intervals generated by the model under different weight modes across various forecasting scenarios.}
    \label{fig:10}
\end{figure}

\clearpage

\subsection{Panoramic Views of Spatiotemporal Consistency in the Dynamic Weight Network}
\label{subapp:C.2}

In Section \ref{subsubsec:5.4.1}, using the Weather dataset as an example, we focused on analyzing the acute perception capabilities of the dynamic guidance network towards local abnormal features. Figure \ref{fig:11} further displays heat map comparisons for five datasets—ETTh1, Exchange, Appliance, Solar, and Traffic—at key time steps during the diffusion denoising process. In this experiment, we set $H=L=168$.

As observed from the experimental results, regardless of variations in variable dimensions and sampling frequencies across datasets, the dynamic guidance weights generated by the network consistently maintain a high degree of spatiotemporal semantic consistency with the true observational precision. For instance, in the Traffic heat map shown in Figure \ref{sub:e}, the true precision exhibits prominent "horizontal band-like" features, indicating the existence of long-term and inherent signal-to-noise ratio differences among different traffic nodes (variable channels). The dynamic weight network successfully captures this cross-channel spatial heterogeneity, continuously allocating strong guidance to high-confidence nodes while precisely isolating the interference from high-noise nodes. In contrast to Traffic, the true precision of the Solar dataset in Figure \ref{sub:d} presents regular "vertical block-like" features, which highly align with the common physical laws shared by all nodes in solar power generation scenarios (e.g., abrupt illumination changes caused by diurnal cycles). The network demonstrates robust temporal perception and adaptive capabilities towards this, adaptively and synchronously reducing the guidance strength across all channels during time periods when the overall system uncertainty surges.

\begin{figure}[p]
    \centering
    \begin{subfigure}[b]{0.8\textwidth}
        \centering
        \includegraphics[width=\linewidth, keepaspectratio]{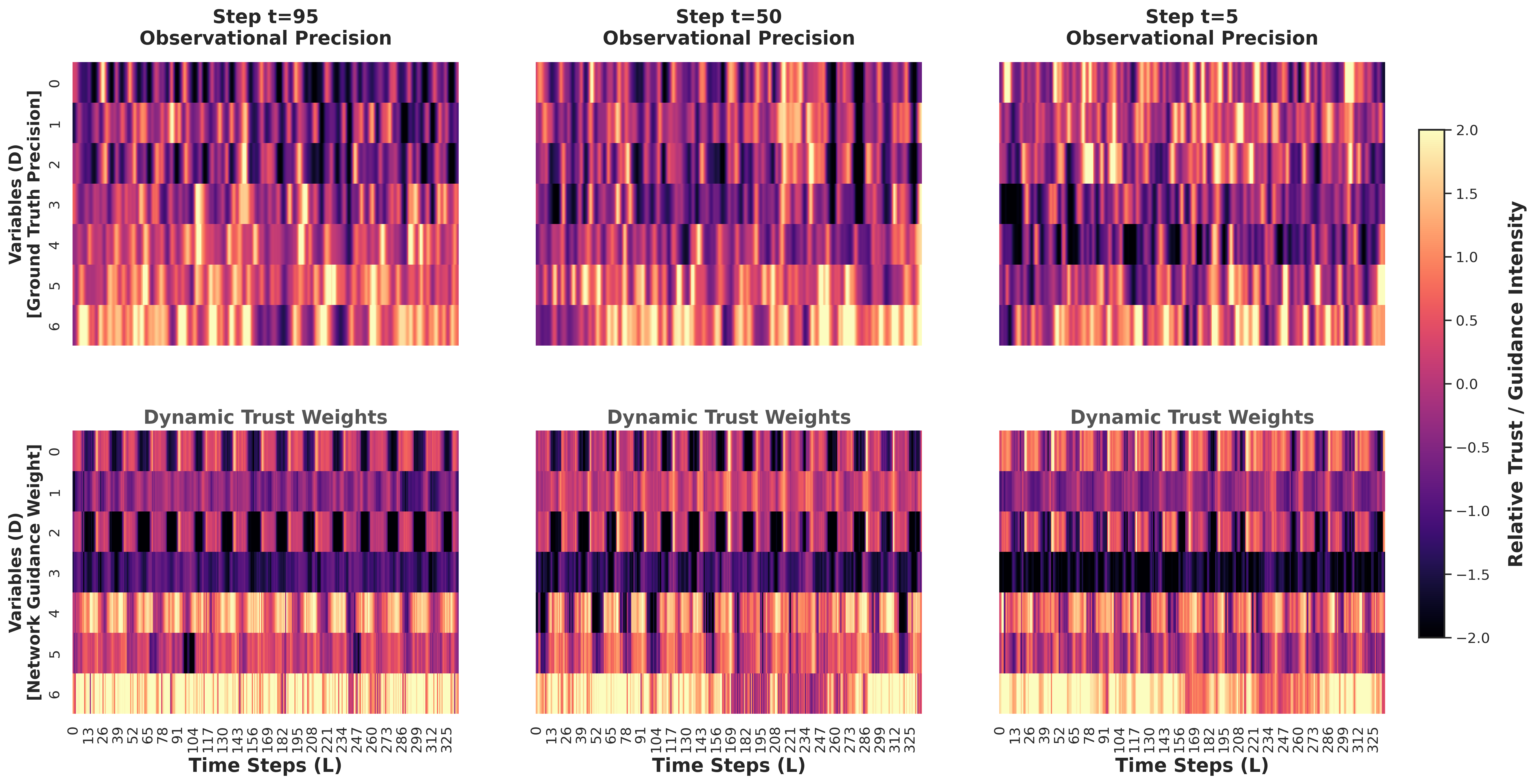}
        \caption{ETTh1}
        \label{sub:a}
    \end{subfigure}
    
    \vspace{1em}
    
    \begin{subfigure}[b]{0.8\textwidth}
        \centering
        \includegraphics[width=\linewidth, keepaspectratio]{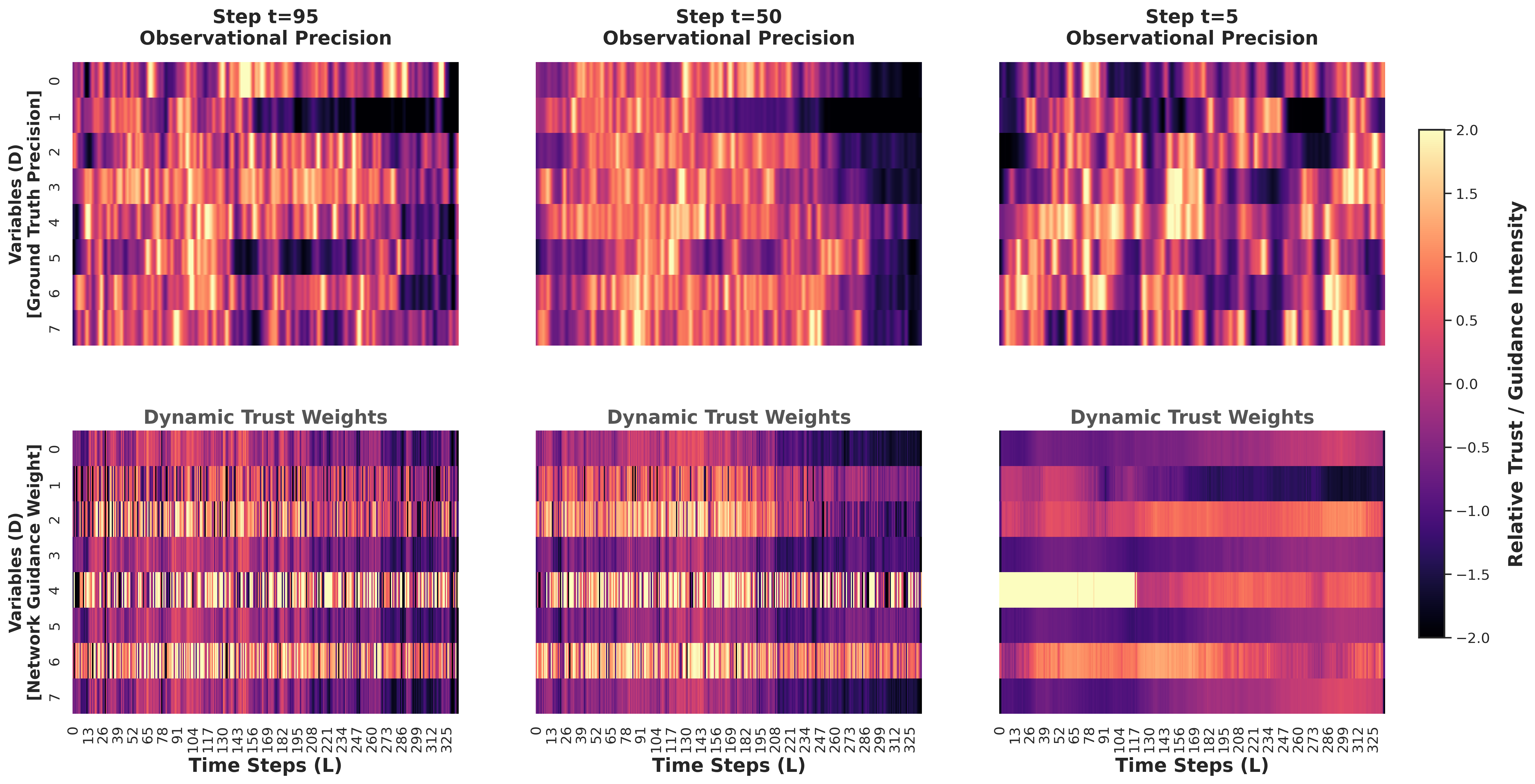}
        \caption{Exchange}
        \label{sub:b}
    \end{subfigure}
    
    \vspace{1em}
    
    \begin{subfigure}[b]{0.8\textwidth}
        \centering
        \includegraphics[width=\linewidth, keepaspectratio]{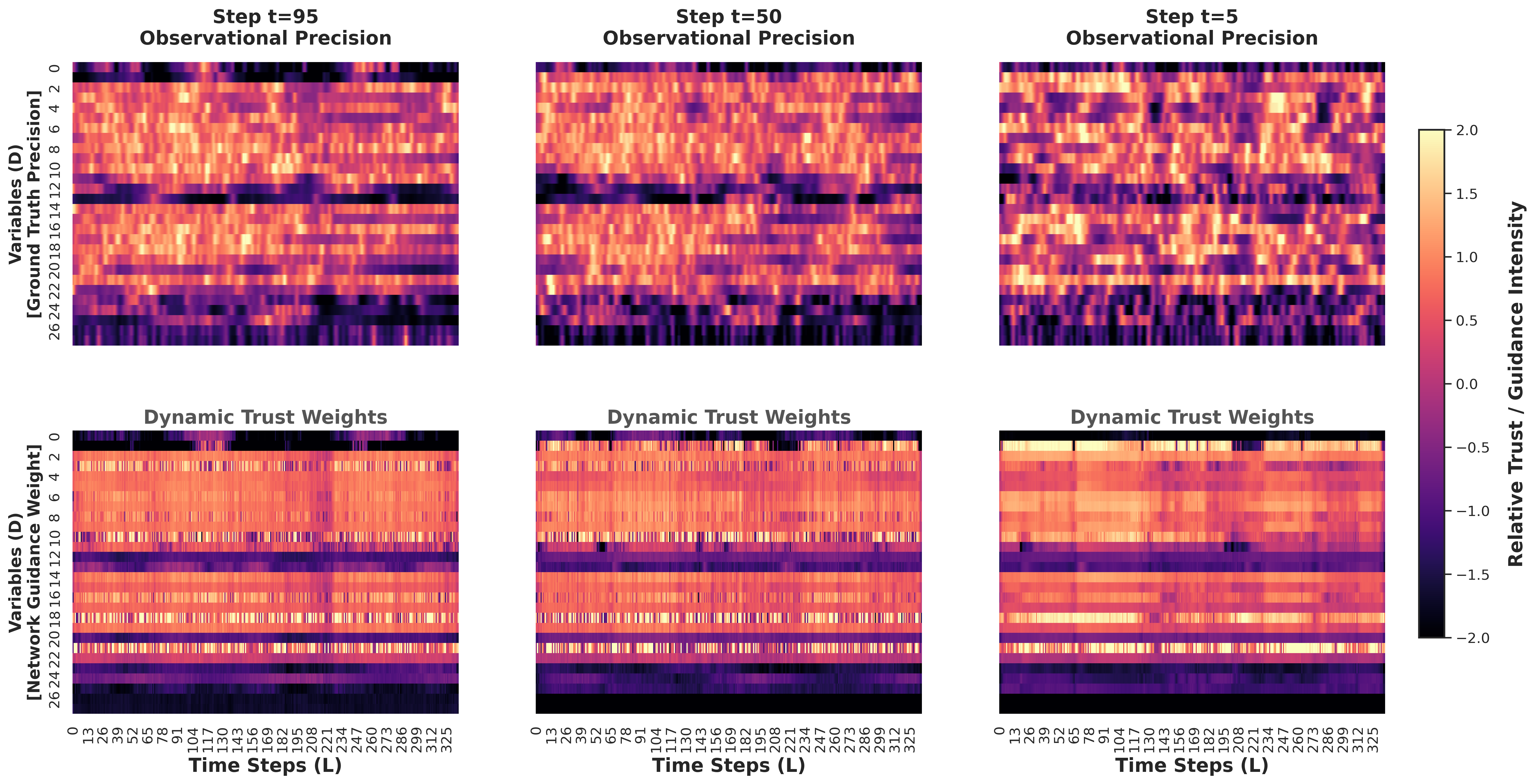}
        \caption{Appliance}
        \label{sub:c}
    \end{subfigure}
\end{figure}

\begin{figure}[p]
    \ContinuedFloat
    \centering
    \begin{subfigure}[b]{0.8\textwidth}
        \centering
        \includegraphics[width=\linewidth, keepaspectratio]{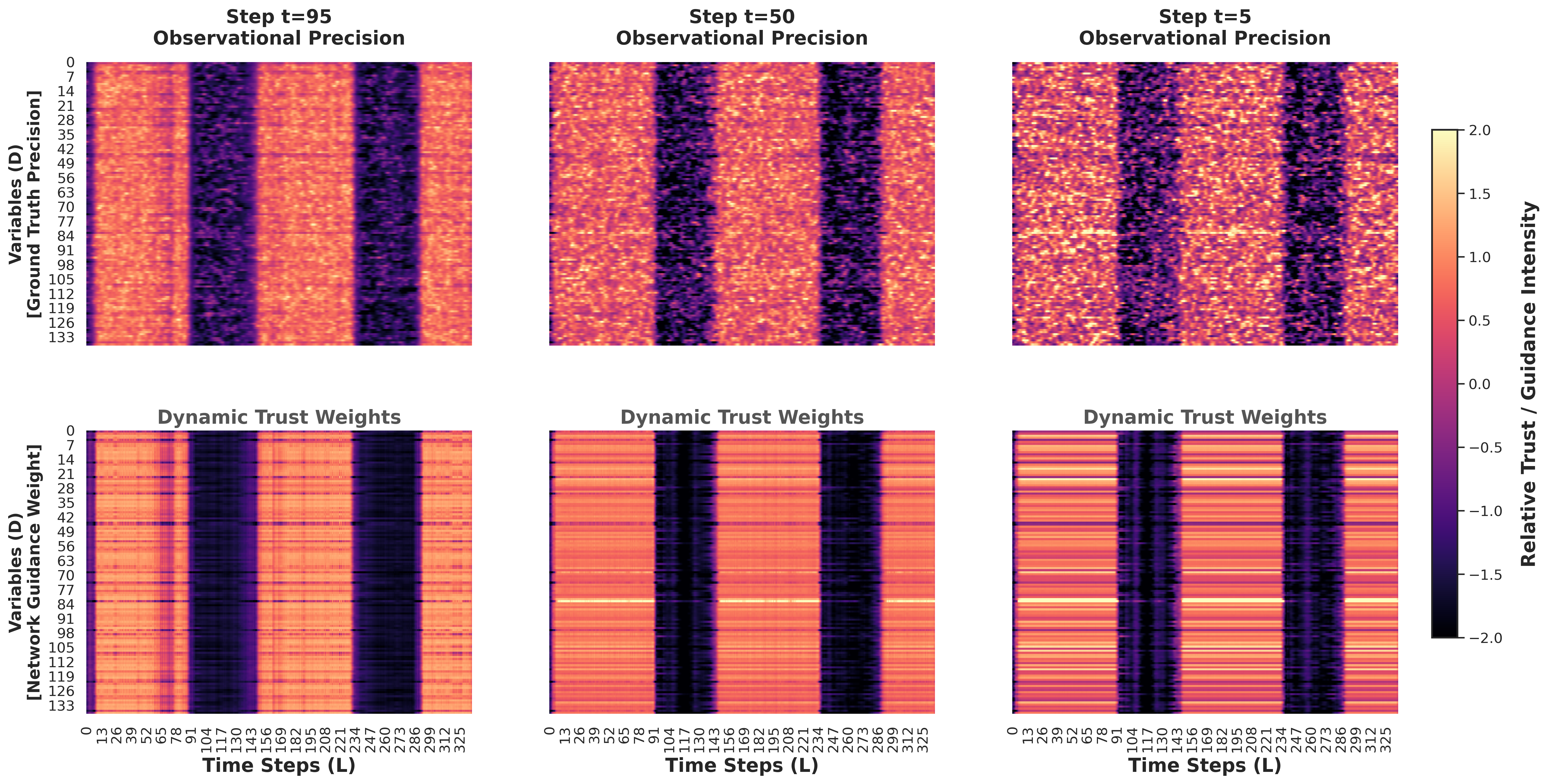}
        \caption{Solar}
        \label{sub:d}
    \end{subfigure}
    
    \vspace{1em}
    
    \begin{subfigure}[b]{0.8\textwidth}
        \centering
        \includegraphics[width=\linewidth, keepaspectratio]{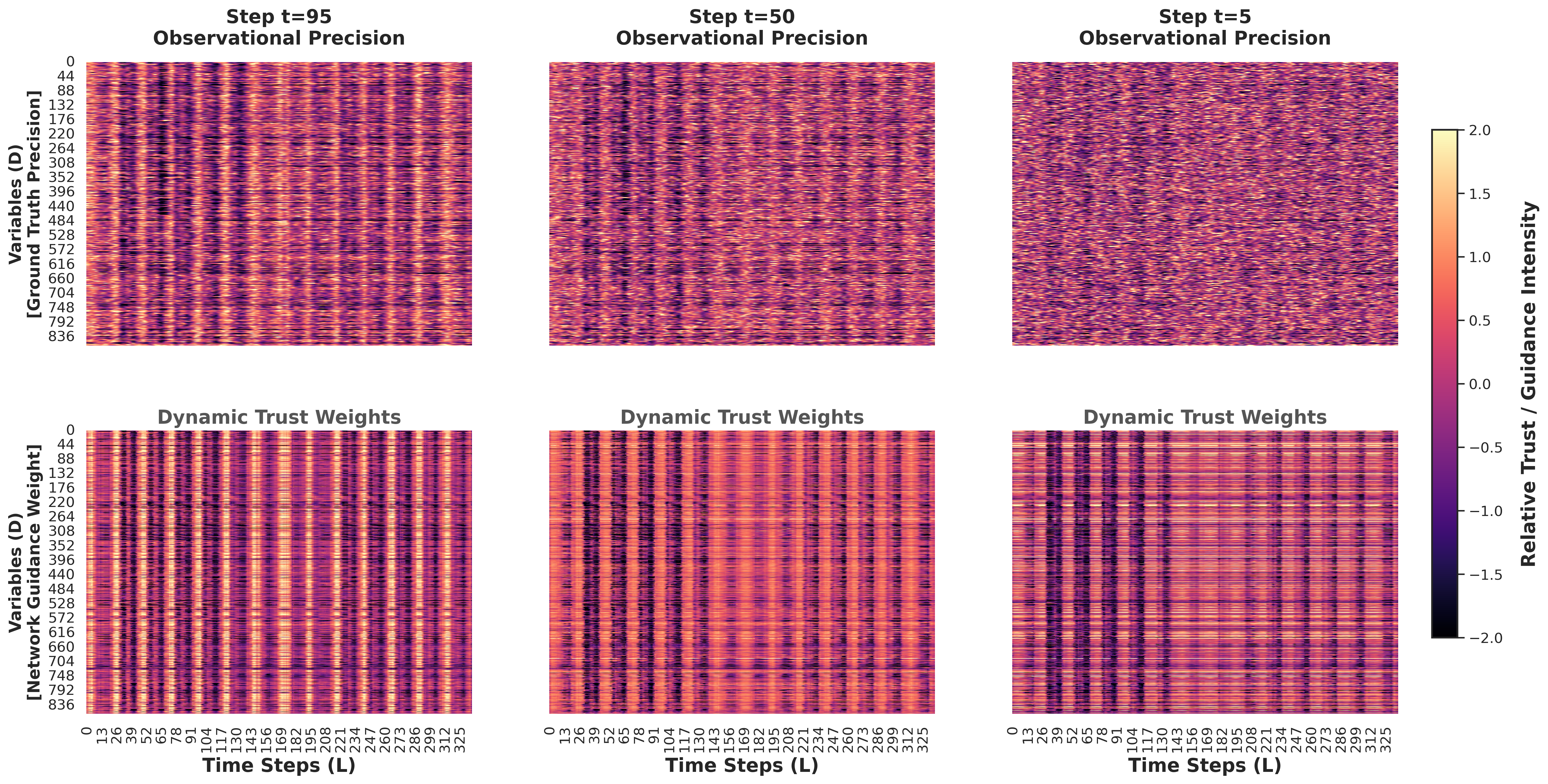}
        \caption{Traffic}
        \label{sub:e}
    \end{subfigure}
    
    \caption{Heatmap comparisons between true observational precision and network-generated dynamic guidance weights at key time steps of the diffusion process across different datasets.}
    \label{fig:11}
\end{figure}

\clearpage

\bibliographystyle{unsrt}

\bibliography{refs}

\end{document}